\documentclass{article}

\usepackage{Packages/usual_packages}
\usepackage{Packages/macros}

\usepackage[final]{}

\usepackage[utf8]{inputenc} %
\usepackage[T1]{fontenc}    %
\usepackage{hyperref}       %
\usepackage{url}            %
\usepackage{booktabs}       %
\usepackage{amsfonts}       %
\usepackage{nicefrac}       %
\usepackage{microtype}      %
\usepackage{xcolor}         %

\title{Implicit Bias of Mirror Flow on Separable Data}

\author{%
\vspace{-0.5em}
\\
  Scott Pesme \\
  EPFL\\
  \and 
  \vspace{-0.5em}
  \\
  \quad 
Radu-Alexandru Dragomir \\
\quad  Télécom Paris \\
\and \vspace{-0.5em} 
\\
 \quad 
  Nicolas Flammarion \\
   \quad  EPFL\\
}

\date{}

\begin{document}

\maketitle

\begin{abstract}
We examine the continuous-time counterpart of mirror descent, namely mirror flow, on classification problems which are linearly separable. Such problems are minimised `at infinity' and have many possible solutions; we study which solution is preferred by the algorithm depending on the mirror potential.
For exponential tailed losses and under mild assumptions on the potential, we show that the iterates converge in direction towards a $\phi_\infty$-maximum margin classifier. The function $\phi_\infty$ is the \textit{horizon function} of the mirror potential and characterises its shape `at infinity'. When the potential is separable, a simple formula allows to compute this function. 
We analyse several examples of potentials and provide numerical experiments highlighting our results.
\end{abstract}

\section{Introduction}
Heavily over-parametrised yet barely regularised neural networks can easily perfectly fit a noisy training set while still performing very well on unseen data~\citep{recht2017understanding}.
This statistical phenomenon is surprising  since it is known that there exists interpolating solutions which have terrible generalisation performances~\citep{liu2020bad}. To understand this benign overfitting, it is essential to take into account the training algorithm.
If overfitting is indeed harmless, it must be because the optimisation process has steered us towards a solution with favorable generalisation properties.

From this simple observation, a major line of work studying the \emph{implicit regularisation} of gradient methods has emerged. These results show that the recovered solution enjoys some type of low norm property in the infinite space of zero-loss solutions. Gradient descent (and its variations) has therefore been analysed in various settings, the simplest and most emblematic being that of gradient descent for least-squares regression: it converges towards the solution which has the lowest $\ell_2$ distance from the initialisation~\citep{lemaire1996asymptotical}. In the classification setting with linearly separable data, iterates of gradient methods must diverge to infinity to minimise the loss. Therefore, the directional convergence of the iterates is considered and \cite{soudry2018implicit}  show that gradient descent selects the $\ell_2$-max-margin solution amongst all classifiers.

Going beyond linear settings, it has been observed that \textbf{an underlying mirror-descent structure very recurrently emerges} when analysing gradient descent in a range of non-linear parametrisations~\citep{woodworth2020kernel,azulay2021implicit}. Providing convergence and implicit regularisation results for mirror descent has therefore gained significant importance.

In this context, for linear regression, \cite{gunasekar2018characterizing} show that the iterates converge to the solution that has minimal Bregman distance to the initial point. Turning towards the classification setting, an apparent gap emerges as there is still no clear understanding of what happens: Can directional convergence be characterised in terms of a max-margin problem? If so, what is the associated norm? Quite surprisingly, this question remains largely unanswered, as it is only understood for $L$-homogeneous potentials~\citep{sun2023unified}. Our paper bridges this gap by formally characterising the implicit bias of mirror descent for separable classification problems.

\subsection{Informal statement of the main result}

For a separable dataset $(x_i, y_i)_{i \in [n]}$, we study the mirror flow $\dd \nabla \phi(\beta_t) = - \nabla L(\beta_t) \dd t$ with potential $\phi : \R^d \to \R$ and an exponential tailed classification loss $L$.
We prove that $\beta_t$ converges in direction towards the solution of the $\phi_\infty$-maximum margin solution where the (asymmetric) norm $\phi_\infty$ captures the shape of the potential $\phi$ `at infinity' (see Figure \ref{fig:sketch_potentials} for an intuitive illustration).

\begin{theorem}[Main result, Informal]\label{informal_theorem} There exists a \textbf{horizon function} $\phi_\infty$ such that for any separable dataset, the normalised mirror flow iterates $\bbeta_t \coloneqq \beta_t / \Vert \beta_t \Vert$ converge and satisfy:
\begin{align*}
    \lim_{t \to \infty} \bbeta_t \ \ \ \text{is proportional to } \ \ \  \underset{ \ \min_i y_i \langle x_i, \bbeta \rangle \geq 1}{\argmin} \phi_\infty(\bbeta) .
\end{align*}
\end{theorem}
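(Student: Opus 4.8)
The plan is to analyze the mirror flow dynamics in two coupled regimes: the growth of the norm $\Vert \beta_t \Vert$ and the stabilization of the direction $\bbeta_t$. First I would establish that the iterates diverge: since the data is separable and $L$ is an exponential-tailed loss minimized at infinity, along any separating direction the loss decreases, and one shows $L(\beta_t) \to 0$, which forces $\Vert \nabla \phi(\beta_t) \Vert \to \infty$ and hence (by coercivity-type properties of $\phi$) $\Vert \beta_t \Vert \to \infty$. The key object is the dual iterate $z_t \coloneqq \nabla \phi(\beta_t)$, which evolves by the clean linear-in-gradient equation $\dot z_t = -\nabla L(\beta_t) = \sum_i y_i x_i \, \ell'(\cdot)$-type terms, so $z_t$ stays (asymptotically) inside the cone spanned by $\{y_i x_i\}$. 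The heart of the argument is then to transfer this structural information on $z_t = \nabla \phi(\beta_t)$ back to $\beta_t$ via the inverse map $\nabla \phi^{-1} = \nabla \phi^*$, and this is exactly where the horizon function $\phi_\infty$ enters: rescaling, one expects $\nabla \phi(\lambda \bbeta) / \lambda^{?} \to \nabla \phi_\infty(\bbeta)$ in an appropriate sense as $\lambda \to \infty$, so the limiting direction of $\beta_t$ is governed by $\phi_\infty$ rather than $\phi$ itself.

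Concretely, I would (i) recall/define $\phi_\infty$ as the horizon function (the recession-type limit $\phi_\infty(\beta) = \lim_{\lambda \to \infty} \phi(\lambda \beta)/h(\lambda)$ for the appropriate normalization $h$, which for separable potentials has the stated closed form), and record its properties: it is an asymmetric norm (positively homogeneous of degree one, convex, nonnegative), so the constrained problem $\min \{\phi_\infty(\bbeta) : \min_i y_i \langle x_i, \bbeta\rangle \ge 1\}$ is a well-posed convex program with KKT conditions stating that at the optimum $\bbeta^\star$, a subgradient of $\phi_\infty$ at $\bbeta^\star$ lies in $\mathrm{cone}\{y_i x_i : i \in \text{support vectors}\}$. (ii) Show the "loss part" of the dynamics concentrates: as $\beta_t \to \infty$ in a separating direction, the exponential tail makes $-\nabla L(\beta_t)$ align, after normalization, with a nonnegative combination of the $y_i x_i$ supported on the (eventual) margin-active examples — a standard consequence of the exponential tail, as in the gradient descent analysis of \cite{soudry2018implicit}. (iii) Combine: $z_t = z_0 + \int_0^t (-\nabla L(\beta_s))\,ds$ therefore has direction converging into that same cone, and since $z_t = \nabla \phi(\beta_t) \approx \nabla \phi_\infty(\bbeta_t)$ (up to the scalar $h'$-type factor) once $\Vert\beta_t\Vert$ is large, we get that $\nabla \phi_\infty(\bbeta_\infty)$ lies in $\mathrm{cone}\{y_i x_i\}$ with the active indices being precisely those where the margin is attained — which is exactly the KKT system for the $\phi_\infty$-max-margin problem. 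Uniqueness of the direction (up to proportionality) then pins down $\lim_t \bbeta_t$.

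For the rigorous execution I would use a Lyapunov/monotonicity argument in the dual: a natural candidate is to track $\phi^*(z_t)$ or the Bregman divergence $D_{\phi^*}(z_t, \nabla\phi(c \bbeta^\star))$ along a scaled target $c\bbeta^\star$ with $c \to \infty$, showing it is controlled and that the margin quantities $\min_i y_i\langle x_i, \beta_t\rangle$ grow at the right (logarithmic-type) rate relative to $\Vert \beta_t\Vert$. The main technical device is the interplay between the time derivative $\frac{d}{dt}\phi^*(z_t) = \langle \beta_t, \dot z_t\rangle = -\langle \beta_t, \nabla L(\beta_t)\rangle = \sum_i -\ell'_i\, y_i\langle x_i,\beta_t\rangle > 0$, which simultaneously tracks loss decrease and margin growth; integrating and using the exponential-tail relation between $\ell$ and $\ell'$ ($-\ell'(u) \sim \ell(u) \sim e^{-u}$) yields the quantitative rates.

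I expect the main obstacle to be step (iii) — the passage from the dual structural fact ($z_t$ in the data cone) to a statement about the \emph{direction} of $\beta_t$. This requires a uniform control on how well $\nabla\phi$ is approximated by (a rescaling of) $\nabla\phi_\infty$ along the trajectory: one needs that $\nabla\phi$ does not "oscillate" in direction as $\Vert\beta\Vert\to\infty$, i.e. the convergence $\nabla\phi(\lambda\bbeta)/h'(\lambda) \to \nabla\phi_\infty(\bbeta)$ is locally uniform in $\bbeta$ and compatible with the (possibly non-differentiable) horizon function. Handling potentials whose horizon function is non-smooth (e.g. the $\ell_1$-type behavior arising from hypentropy or related potentials), where $\nabla\phi_\infty$ must be replaced by a subdifferential, and ensuring the active-set of the margin problem is identified correctly in the limit, is the delicate part; the mild assumptions on $\phi$ in the paper are presumably exactly what is needed to make this approximation and the resulting directional convergence go through, together with a separate argument (convexity plus strict complementarity or a curvature condition) guaranteeing that the limiting direction is unique.
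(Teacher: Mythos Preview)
Your high-level strategy---integrate the dual variable $z_t=\nabla\phi(\beta_t)$, show its direction lies in the cone generated by $\{y_i x_i\}$ with nonzero weights only on support vectors, and match this to the KKT system of the $\phi_\infty$-max-margin problem---is exactly the skeleton of the paper's proof. The slackness step (your (ii)) and the subsequence/uniqueness wrap-up are essentially identical.

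The substantive differences are in how $\phi_\infty$ is built and how the crucial dual-to-primal transfer (your step (iii)) is carried out. You define $\phi_\infty(\beta)=\lim_{\lambda\to\infty}\phi(\lambda\beta)/h(\lambda)$ for some normalisation $h$; the paper instead defines $\phi_\infty$ \emph{geometrically} as the gauge of the Hausdorff limit $S_\infty$ of the normalised sublevel sets $\bar S_c = S_c/R_c$. Your formula appears in the paper only as a corollary for \emph{separable} potentials (Theorem~\ref{thm:separable}); for general $\phi$ there need not be a single scalar $h(\lambda)$ that works. More importantly, the gradient-convergence obstacle you correctly flag---getting from ``$\nabla\phi(\beta_t)/\|\nabla\phi(\beta_t)\|$ converges'' to ``the limit lies in $\partial\phi_\infty(\bar\beta_\infty)$''---is not handled by proving locally uniform convergence of $\nabla\phi(\lambda\bar\beta)/h'(\lambda)$ as you suggest. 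The paper sidesteps this entirely: Hausdorff convergence of $\bar S_c$ gives epi-convergence of their indicator functions, and then \emph{Attouch's theorem} yields Painlev\'e--Kuratowski convergence of the graphs of the subdifferentials (i.e., of the normal cones). Since $\nabla\phi(\beta_t)$ is always in the normal cone of the sublevel set through $\beta_t$, any directional limit automatically lands in $\partial I_{S_\infty}(\hat\beta)$, hence in $\lambda\,\partial\phi_\infty(\bar\beta_\infty)$. This variational-analysis route is what makes the argument work for non-smooth $\phi_\infty$ without any uniform gradient estimates.

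Two smaller points: the paper performs a time reparametrisation so that the integrated dual reads $\frac{1}{t}\nabla\phi(\beta_t)=\frac{1}{t}\nabla\phi(\beta_0)+Z^\top\bar q_t$ with $\bar q_t$ a Ces\`aro average of simplex-valued $q_s$; this is how the limit $q_\infty$ is produced, rather than via the Lyapunov/Bregman rate arguments you sketch (those are not needed for this purely asymptotic statement). Also, the implication you state is reversed: one first gets $\|\beta_t\|\to\infty$ from $L(\beta_t)\to 0$, and then $\|\nabla\phi(\beta_t)\|\to\infty$ from the coercive-gradient assumption.
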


\begin{figure}[t]
    \centering
    \includegraphics[width=\textwidth]{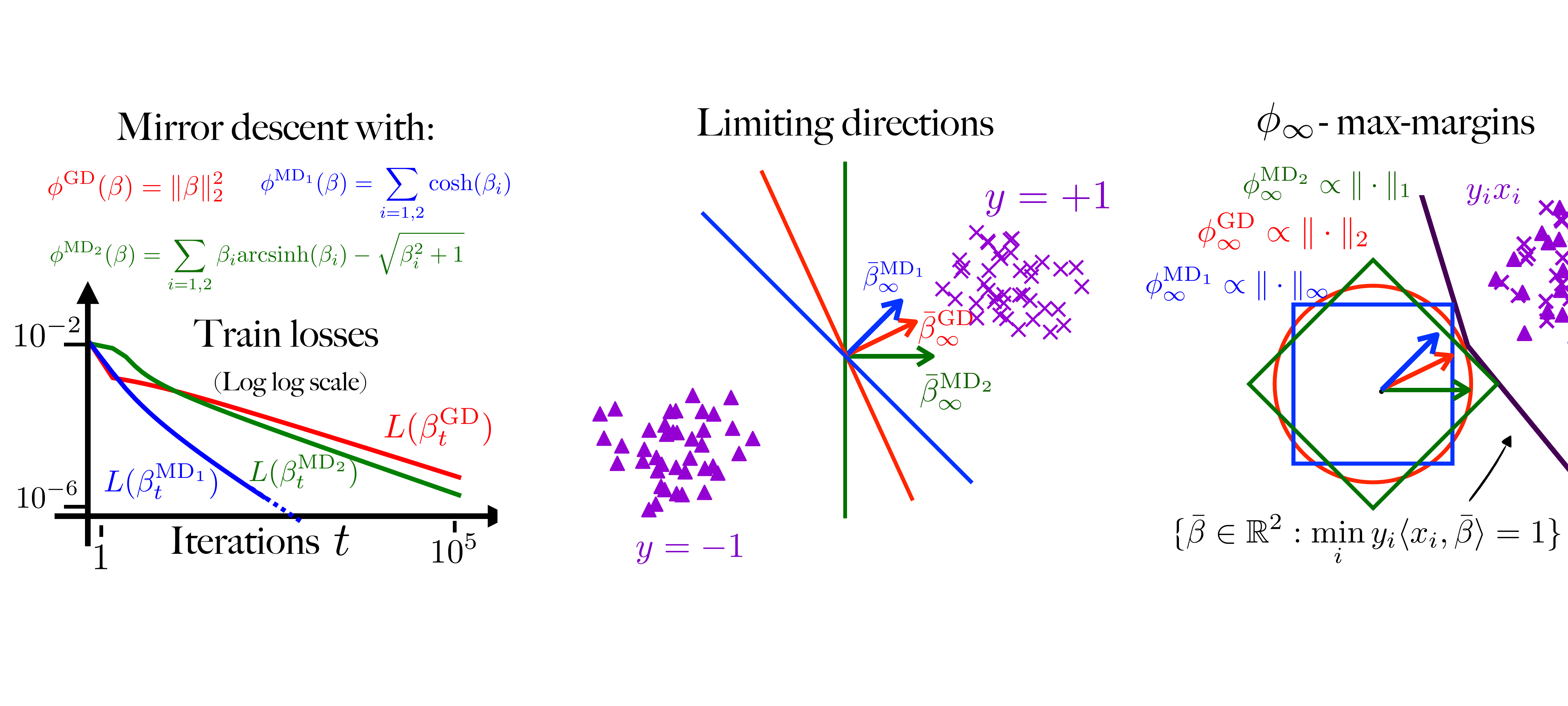}
    \vspace{-4em}
        \caption{Mirror descent is performed using $3$ different potentials on the same toy $2$d dataset. \textit{Left:} the losses converge to zero. \textit{Center:} the iterates converge in direction towards $3$ different vectors $\bbeta_\infty$, the $3$ lines passing through the origin correspond to the associated separating hyperplanes. \textit{Right:} the limit directions are each proportional to $\argmin \phi_\infty(\bbeta)$ under the constraint $\min_i y_i \langle x_i, \bbeta \rangle \geq 1$ for their respective $\phi_\infty$'s, as predicted by our theory (\Cref{informal_theorem}). The full trajectories are plotted \Cref{fig:traj} and we refer to \Cref{section:experiments} for more details.}
    \label{fig:2d_experiments}
\end{figure}

Our result holds for a large class of potentials $\phi$ and recovers previous results obtained for $\phi = \Vert \cdot \Vert_p^p$~\citep{sun2022mirror} and for $L$-homogeneous potentials~\citep{sun2023unified}. For general potentials, showing convergence towards a maximum margin classifier is much harder because, in stark contrast with homogeneous potentials,  $\phi$'s geometry changes as the iterates diverge. To capture the behaviour of $\phi$ at infinity, we geometrically construct its horizon function $\phi_\infty$. By considering $\phi$'s successive level sets (and re-normalising them to prevent blow up), we show that under mild assumptions, these sets asymptotically converge towards a limiting horizon set $S_\infty$. The horizon function $\phi_\infty$ is then simply the asymmetric norm which has $S_\infty$ as its unit ball (see Figure \ref{fig:S_c} for an illustration). In addition, when the function $\phi$ is  `separable' and can be written $\phi(\beta) = \sum_i \varphi(\beta_i)$ for a real valued function $\varphi$, then a very simple and explicit formula enables to calculate $\phi_\infty$ (\Cref{thm:separable}).

The paper is organised as follows. The classification setting as well as the assumptions on the loss and the potential are provided in \Cref{section:pb_setp}.  The proof sketch and an intuitive construction of the horizon function are given in \Cref{section:intuitive_constr}. In \Cref{s:main_result}, we state the formal definition and results. Simple examples of horizon potentials and numerical experiments supporting our claims are finally given in \Cref{section:experiments}.

\subsection{Relevance of mirror descent and related work}
We first outline the motivations for understanding the implicit regularization of mirror descent and discuss related works that contextualize our contribution within the machine learning context.
\paragraph{Relevance of studying mirror descent in the context of machine learning.}
Though mirror descent is not \textit{per se} an algorithm  used by machine learning practitioners, it proves to be a very useful tool for theoreticians in the field. 
Indeed, when analysing gradient descent (and its stochastic and accelerated variants) on neural-network architectures, an underlying mirror-descent structure very recurrently emerges. Then, results for mirror descent enable to prove convergence as well as characterise the implicit bias of gradient descent for these architectures. Diagonal linear networks, which are ideal proxy models for gaining insights on complex deep-learning  phenomenons, is the most notable example of such an architecture. The hyperbolic entropy potential naturally appears and enables to prove countless results: implicit bias of gradient descent in regression~\citep{woodworth2020kernel,vaskevicius2019implicit} and in classification~\citep{moroshko2020implicit}, effect of stochasticity~\citep{pesme2021implicit} and momentum~\citep{papazov2024leveraging}, convergence of gradient descent and effect of the step-size~\citep{even2023s}, saddle-to-saddle dynamics~\citep{pesme2023saddle}. Unveiling an underlying mirror-like structure goes beyond these simple networks as they also appear in: matrix factorisation with commuting observations~\citep{gunasekar2017implicit,wu2021implicit}, fully connected linear networks~\citep{azulay2021implicit,varre2023spectral} and $2$-layer ReLU networks~\citep{chizat2020implicit}. Building on these examples, \citet{li2022implicit} investigate the formal conditions that ensure the existence of a mirror flow reformulation for general parametrisations, extending previous results by~\citet{amid2020winnowing,amid2020reparameterizing}.

\paragraph{Gradient descent in classification.} Numerous works have  studied gradient descent in the classification setting.
For linear parametrisations, separable data and  exponentially tailed losses, \cite{soudry2018implicit} prove that GD converges in direction towards the $\ell_2$-maximum margin classifier and provides convergence rates.  A very fine description of this divergence trajectory is conducted by \cite{ji2018risk} and a different primal-dual analysis leading to tighter rates is given by \cite{ji2021characterizing}. Similar results are proven for stochastic gradient descent by \cite{nacson2019stochastic}. In the case of general loss tails, \cite{ji2020gradient} prove that gradient descent asymptotically follows the $\ell_2$-norm regularisation path. A whole `astral theory' is developed by \cite{dudik2022convex} who provide a framework which enables to handle `minimisation at infinity'.
Beyond the linear case,  \citet{Lyu2020Gradient} proves for homogeneous neural networks that any directional limit point of gradient descent is along a KKT point of the $\ell_2$-max margin problem. A weaker version of this result was previously obtained by \cite{nacson2019lexicographic}. Furthermore, convergence results for linear networks are provided by \citet{yun2021mi}. Finally, for $2$-layer networks in the infinite width limit, assuming directional convergence, \cite{chizat2020implicit} proves that the limit can be characterised as a max-margin classifier in a certain space of functions.

\subsection{Notations}
We provide here a few notations which will be useful throughout the paper. We let $[n]$ be the integers from  $1$ to $n$. We denote by $Z \in \R^{n \times d}$ the feature matrix whose $i^{th}$ line corresponds to the vector $y_i x_i$. When not specified, $\norm{\cdot}$ corresponds to any (definable) norm on $\R^d$. For a convex function $h$, $\partial h(\beta)$ denotes its subdifferential at $\beta$: $\partial h(\beta) = \{ g \in \R^d : h(\beta') \geq h(\beta) + \langle g, \beta' - \beta \rangle, \forall \beta' \in \R^d\}$.
For any scalar function $f :  \R \to \R$ and vector $u \in \R^p$, the vector $f(u) \in \R^p$ corresponds to the component-wise application of $f$ over $u$. We denote by $\sigma : \R^n \to \R^n$ the softmax function equal to $\sigma(z) = \exp(z) / \sum_{i = 1}^n \exp(z_i) \in \Delta_n$ where $\Delta_n$ is the unit simplex. For a convex potential $\phi$, we denote $D_\phi(\beta, \beta_0)$ the Bregman divergence equal to $\phi(\beta) - (\phi(\beta_0) + \langle \nabla \phi(\beta_0), \beta - \beta_0 \rangle) \geq 0$.

\section{Problem set-up}\label{section:pb_setp}

We consider a dataset 
$(x_i, y_i)_{1 \leq i \leq n}$ with points $x_i \in \R^d$ and binary labels $y_i \in \{-1, 1\}$. We choose a loss function $\ell :\R \rightarrow \R$ and seek to minimise the empirical risk
\[ L(\beta) = \sum_{i = 1}^n \ell ( y_i \la x_i, \beta \ra ). \]
We propose to study the dynamics of mirror flow, which is the continuous-time limit of the \textit{mirror descent} algorithm~\citep{beck2003mirror}. Mirror descent is a generalisation of gradient descent to non-Euclidean geometries induced by a given convex {potential} function $\phi:\R^d \rightarrow \R$. The method generates a sequence $(\hat \beta_k)_{k \geq 0}$ with $\hat \beta_0 = \beta_0 \in \R^d$ and 
\[
    \nabla \phi(\hat \beta_{k+1}) = 
    \nabla \phi(\hat \beta_{k}) - \gamma \nabla L(\hat \beta_{k}).
\]
When the step size $\gamma$ goes to 0, the mirror descent iterates approach the solution $(\beta_t)_{t \geq 0}$ to the following differential equation:
\begin{align}
\label{mirror_descent_ODE}
\tag{MF}
\dd \nabla \phi (\beta_t) = - \nabla L(\beta_t) \dd t,
\end{align}
initialised at $\beta_{0}$. Studying the mirror flow \eqref{mirror_descent_ODE} leads to simpler computations than its discrete counterpart, and still allows to obtain rich insights about the algorithm's behaviour.

We now state our standing assumptions on the loss function $\ell$ and potential $\phi$.

\begin{ass}\label{ass:loss} The loss $\ell$ satisfies:
   \begin{enumerate}
    \item $\ell$ is convex, twice continuously differentiable, decreasing and $\lim_{z\rightarrow +\infty} \ell(z) = 0 $.
    \item $\ell$ has an \textit{exponential tail}, in the sense that $\ell(z) \underset{z \to \infty}{\sim}   - \ell'(z) \underset{z \to \infty}{\sim} \exp(- z)$.
\end{enumerate} 
\end{ass}
The first part of the assumptions is very general and ensures that the empirical loss $L$ can be minimised `at infinity'. The exponential tail is crucial: it enables to identify a unique maximum margin solution towards which the iterates converge in direction, independently of the considered loss. Both the exponential $\ell(z) = \exp(-z)$ and the logistic loss $\ell(z) = \ln(1 + \exp(-z))$ satisfy the conditions. On the other hand, losses with polynomial tails do not satisfy the second criterion. Similar assumptions on the tail appear when investigating the implicit bias of gradient descent for separable data~\citep{soudry2018implicit,nacson2019convergence,ji2020gradient,ji2021characterizing,chizat2020implicit}.

\begin{ass} 
\label{ass:potential}
The potential $\phi : \R^d \to \R$ satisfies:
   \begin{enumerate}
\item $\phi$ is twice continuously differentiable, strictly convex and coercive.
\item for every $c \in \R_{\geq 0}$ and $\beta_2 \in \R^d$, the sub-level set $\{ \beta_1 \in \R^d, D_\phi(\beta_2, \beta_1) \leq c \}$ is bounded.
\item $\nabla^2 \phi(\beta)$ is positive-definite for all $\beta \in \R^d$.
\item $\nabla \phi$ diverges at infinity: $\lim_{\Vert \beta \Vert \to \infty} \norm{\nabla \phi(\beta)} = + \infty$.
    \end{enumerate} 
\end{ass}

The first two points of the assumption are commonly used to ensure well-posedness of mirror descent~\citep{bauschke2017descent}. The third one is necessary in continuous time to ensure the existence and uniqueness over $\R_{\geq 0}$ of a solution to the \eqref{mirror_descent_ODE} differential equation (in particular, we want to avoid the solution \textit{``blowing up in finite time''}; see \Cref{lemma:existence_MF} in \Cref{app:MD_proofs}).
The coercive gradient assumption is crucial for our main result and we discuss it in more depth in~\Cref{sec:conclu}.

Finally, we assume that the dataset is linearly separable.
\begin{ass}
There exists $\beta^\star \in \R^d$ such that $y_i \la \beta^\star, x_i \ra > 0$ for every $i \in [n]$.
\end{ass}
Notice that such $\beta^\star$'s correspond to minimisation directions: $L(\lambda \beta^\star) \overset{\lambda \to \infty}{\longrightarrow} 0$. Under the three previous assumptions, we can  show that the mirror flow iterates $(\beta_t)_{t \geq 0}$ minimise the loss while diverging to infinity.

\begin{restatable}{proposition}{propLoss}\label{prop:loss_to_0}
Considering the mirror flow
$(\beta_t)_{t \geq 0}$,
the loss converges towards $0$ and the iterates diverge: $\limt L(\beta_t) = 0$ and $\limt \norm{\beta_t} = +\infty$. 
\end{restatable}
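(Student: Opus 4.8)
The plan is to exploit the fact that mirror flow is a descent method for $L$, combined with the boundedness of Bregman sublevel sets (Assumption \ref{ass:potential}, point 2) and coercivity of $\nabla\phi$ (point 4), to first show $L(\beta_t)\to 0$ and then derive divergence of $\|\beta_t\|$ from the fact that $L$ attains its infimum $0$ only ``at infinity.'' First I would compute the time derivative of the loss along the flow. Since $\dd\nabla\phi(\beta_t) = -\nabla L(\beta_t)\dd t$ and $\nabla^2\phi(\beta_t)$ is positive-definite, we have $\dot\beta_t = -[\nabla^2\phi(\beta_t)]^{-1}\nabla L(\beta_t)$, hence
\begin{align*}
\frac{\dd}{\dd t}L(\beta_t) = \langle \nabla L(\beta_t), \dot\beta_t\rangle = -\langle \nabla L(\beta_t), [\nabla^2\phi(\beta_t)]^{-1}\nabla L(\beta_t)\rangle \leq 0,
\end{align*}
so $L(\beta_t)$ is non-increasing and, being bounded below by $0$, converges to some limit $L_\infty \geq 0$. (Here I would invoke Lemma \ref{lemma:existence_MF} for global existence of the flow.)

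Next I would show $L_\infty = 0$. The standard device is to track the ``potential energy'' $D_\phi(\lambda\beta^\star, \beta_t)$ for a fixed separating direction $\beta^\star$ and large $\lambda>0$. Differentiating the Bregman divergence in its second argument gives $\frac{\dd}{\dd t}D_\phi(\lambda\beta^\star,\beta_t) = \langle \nabla\phi(\beta_t) - \nabla\phi(\lambda\beta^\star), \dot\beta_t\rangle$; using $\nabla^2\phi(\beta_t)\dot\beta_t = -\nabla L(\beta_t)$ this rewrites as $\langle \lambda\beta^\star - \beta_t, \nabla L(\beta_t)\rangle$, which by convexity of $L$ is at most $L(\lambda\beta^\star) - L(\beta_t)$. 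If $L_\infty > 0$, choose $\lambda$ large enough that $L(\lambda\beta^\star) < L_\infty$ (possible since $L(\lambda\beta^\star)\to 0$); then $\frac{\dd}{\dd t}D_\phi(\lambda\beta^\star,\beta_t) \leq L(\lambda\beta^\star) - L_\infty < 0$ for all $t$, forcing $D_\phi(\lambda\beta^\star,\beta_t)\to -\infty$, contradicting $D_\phi \geq 0$. Hence $L_\infty = 0$.

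Finally, to get $\|\beta_t\|\to\infty$, suppose not: then along some sequence $t_k\to\infty$ we have $\beta_{t_k}\to\bar\beta$ (bounded sequences have convergent subsequences), and by continuity of $L$, $L(\bar\beta) = L_\infty = 0$. But $L(\bar\beta)=\sum_i \ell(y_i\langle x_i,\bar\beta\rangle) = 0$ would require each term $\ell(y_i\langle x_i,\bar\beta\rangle) = 0$, which is impossible since $\ell > 0$ everywhere (it is decreasing with limit $0$ at $+\infty$, so never attains $0$). This contradiction gives $\|\beta_t\|\to\infty$. The main obstacle I anticipate is the careful handling of the Bregman-divergence differential inequality — in particular justifying that $\frac{\dd}{\dd t}D_\phi(\lambda\beta^\star,\beta_t) \le L(\lambda\beta^\star)-L(\beta_t)$ cleanly and ensuring all the manipulations with $\nabla^2\phi$ and $\dot\beta_t$ are valid given only the stated regularity; the boundedness assumption on Bregman sublevel sets (point 2) is what prevents $\beta_t$ from escaping in a ``bad'' direction while $D_\phi$ stays finite, and should be used to make the final contradiction fully rigorous if one wants to avoid relying on subsequential compactness of $\beta_t$ directly.
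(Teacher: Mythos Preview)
Your argument is correct and follows essentially the same route as the paper: monotonicity of $L$ via positive-definiteness of $\nabla^2\phi$, the Bregman--Lyapunov inequality $\frac{\dd}{\dd t}D_\phi(\beta,\beta_t)\le L(\beta)-L(\beta_t)$, and the fact that $\ell>0$ never attains its infimum. The paper integrates that inequality to obtain $L(\beta_t)\le L(\beta)+D_\phi(\beta,\beta_0)/t$ rather than arguing by contradiction, and for divergence it notes directly that each margin $y_i\langle x_i,\beta_t\rangle\to+\infty$ (since $\ell(y_i\langle x_i,\beta_t\rangle)\le L(\beta_t)\to 0$) instead of extracting a subsequence---both purely stylistic differences. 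One small slip to fix: your intermediate identity $\frac{\dd}{\dd t}D_\phi(\lambda\beta^\star,\beta_t)=\langle\nabla\phi(\beta_t)-\nabla\phi(\lambda\beta^\star),\dot\beta_t\rangle$ is incorrect; the right computation is $\frac{\dd}{\dd t}D_\phi(\beta,\beta_t)=\langle\tfrac{\dd}{\dd t}\nabla\phi(\beta_t),\beta_t-\beta\rangle=-\langle\nabla L(\beta_t),\beta_t-\beta\rangle$, which immediately gives the inequality you then use. Point~2 of Assumption~\ref{ass:potential} is not needed here---it enters only in the global-existence Lemma~\ref{lemma:existence_MF}.
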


The proof relies on classical techniques used to analyse gradient methods in continuous time and we defer the proof to \Cref{app:MD_proofs}.
We now turn to the main question addressed in this paper:
\begin{center}
\textit{Among all minimising directions $\beta^\star$, towards which does the mirror flow converge?}
\end{center}
We initially offer a heuristic and intuitive answer to this question, setting the stage for the formal construction of the implicit regularisation problem.

\section{Intuitive construction of the implicit regularisation problem}\label{section:intuitive_constr}
In this section, we give an informal presentation and proof sketch of our main result. A fully rigorous exposition is then provided in Section \ref{s:main_result}.

\paragraph{Preliminaries.} Assume here for simplicity that $\ell(z) = \exp(-z)$. The mirror flow then writes
\begin{align*}
    \frac{\dd}{\dd t}\nabla \phi(\beta_t) = L(\beta_t) \cdot Z^T q(\beta_t),
\end{align*}
with $q(\beta_t) = \sigma(- Z \beta_t)$, where $\sigma$ is the softmax function and $Z$ the matrix with rows $(y_i x_i)_{i \in [n]}$. Note that $q(\beta_t)$ belongs to the unit simplex $\Delta_n$.

We simplify the differential equation by performing a time rescaling, which does not change the asymptotical behaviour.
As $\theta:t \mapsto \int_0^t L(\beta_s) \dd s$ is a bijection in $\R_{\geq 0}$ (see Lemma \ref{app:lemma:diverging_loss}),
we can speed up time and consider the accelerated iterates  $\tilde{\beta}_t = \beta_{\theta^{-1}(t)}$.
\footnote{$\tilde \beta_t$ can also be seen as the mirror flow trajectory but on the log-sum-exp function instead of the sum-exp function} By the chain rule, we have
\[ \frac{\dd}{\dd t}\nabla \phi({\tilde \beta}_t) = Z^\top q(\tilde \beta_t),
\]
and therefore
\begin{align}\label{eq:integrated_MF}
   \frac{1}{t} \nabla \phi(\tilde \beta_t) = \frac{1}{t} \nabla \phi(\beta_0) + Z^\top \Big ( \frac{1}{t} \int_0^t q(\tilde \beta_s) \dd s \Big ).
\end{align}
From now on, we drop the tilde notation and assume that a change of time scale has been done. 
We want to characterise the directional limit of the diverging iterates $\beta_t$. To do so, we study their normalisation $\bar{\beta}_t \coloneqq \frac{\beta_t}{\Vert \beta_t \Vert}$. As they form a bounded sequence, and $q(\beta_t) \in \Delta_n$ is also bounded, we can extract a subsequence\footnote{More precisely, we let $(t_n)_{n \in \mathbb{N}}$ be any sequence with $t_n \to \infty$ as $n \to \infty$ and then consider the discrete sequence $(\bar{\beta}_{t_n}, q(\beta_{t_n}))_{n \in \mathbb{N}}$. We can then apply the Bolzano–Weierstrass theorem to extract a convergent subsequence. We will show that the limit $(\bar{\beta}_\infty, q_\infty)$ is unique and does not depend on the sequence $(t_n)$, and therefore the continuous-time process $(\bar{\beta}_{t}, q(\beta_{t}))_{t \in \mathbb{R}}$ must converge towards it.} $(\bbeta_{t_s}, q(\beta_{t_s}))_{s \in \mathbb{N}}$, with $\lim_{s \rightarrow \infty} t_s = \infty$ converging to some limit $(\bbeta_\infty,q_\infty)$. By the Césaro average property, $\frac{1}{t_s} \int_0^{t_s} q(\beta_s) \dd s$ also converges towards $q_\infty$. \Cref{eq:integrated_MF} then yields
\begin{align}\label{eq:convergence_nablaphi}
   \frac{1}{t_s} \nabla \phi(\beta_{t_s}) \underset{s \to \infty}{\longrightarrow} Z^\top q_\infty.
\end{align}
Observe that $q(\beta_t) = \sigma(- Z\beta_t)$ and the softmax function $\sigma$ approaches the argmax operator at infinity. Hence, as $\beta_t$ diverges, we expect that $q(\beta_t)_k \rightarrow 0$ for coordinates $k$ for which $(- Z\beta_t)_k$ is not maximal, \textit{i.e.} $(Z\beta_t)_k$ not minimal. This observation is made formal in the following lemma. Its proof is straightforward and is given in \Cref{app:MD_proofs}. 

\begin{lemma}
\label{lemma:main:comp_slackness}
Assume that $(\bbeta_{t_s}, q(\beta_{t_s})) \overset{s \to \infty}{\longrightarrow} (\bbeta_\infty, q_\infty)$. It holds that: 
\begin{align*}
(q_\infty)_k= 0 \quad \text{if} \quad y_k \la x_k, \bbeta_\infty \ra > \min_{1 \leq i \leq n} y_i \la x_i, \bbeta_\infty \ra. \end{align*}
\end{lemma}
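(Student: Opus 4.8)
The plan is to exploit the known asymptotics of the softmax/loss derivative and the hypothesis that $\beta_{t_s}$ diverges to infinity. Recall from \Cref{prop:loss_to_0} that $\norm{\beta_t} \to \infty$, so along the subsequence we may write $\beta_{t_s} = r_s \bbeta_{t_s}$ with $r_s \coloneqq \norm{\beta_{t_s}} \to \infty$ and $\bbeta_{t_s} \to \bbeta_\infty$. For each coordinate $k$, the definition of the softmax gives
\begin{align*}
q(\beta_{t_s})_k = \sigma(-Z\beta_{t_s})_k = \frac{\exp\big(-(Z\beta_{t_s})_k\big)}{\sum_{i=1}^n \exp\big(-(Z\beta_{t_s})_i\big)} \le \exp\Big(-\big[(Z\beta_{t_s})_k - \min_{1\le i \le n}(Z\beta_{t_s})_i\big]\Big),
\end{align*}
since the denominator is at least $\exp(-\min_i (Z\beta_{t_s})_i)$.

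Now I would control the exponent. Writing $(Z\beta_{t_s})_k = r_s\, (Z\bbeta_{t_s})_k = r_s\, y_k\la x_k, \bbeta_{t_s}\ra$ and similarly for the minimum, the bracket above equals $r_s\big(y_k\la x_k,\bbeta_{t_s}\ra - \min_i y_i\la x_i, \bbeta_{t_s}\ra\big)$. Under the hypothesis $y_k\la x_k, \bbeta_\infty\ra > \min_i y_i\la x_i, \bbeta_\infty\ra$, set $\delta \coloneqq y_k\la x_k, \bbeta_\infty\ra - \min_i y_i\la x_i, \bbeta_\infty\ra > 0$. By continuity of the finite minimum of linear functions and of each $\bbeta \mapsto y_k\la x_k, \bbeta\ra$, for $s$ large enough the bracket $y_k\la x_k,\bbeta_{t_s}\ra - \min_i y_i\la x_i,\bbeta_{t_s}\ra$ is at least $\delta/2$. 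Hence $q(\beta_{t_s})_k \le \exp(-r_s \delta / 2) \to 0$ as $s \to \infty$, and since $q(\beta_{t_s})_k \to (q_\infty)_k$ by assumption, we conclude $(q_\infty)_k = 0$.

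There is essentially no serious obstacle here; the argument is the elementary observation that an exponentially small quantity with a divergent rate vanishes. The only mild subtlety — and the single point worth writing carefully — is the uniformity in $s$: one must pass from the pointwise inequality $y_k\la x_k, \bbeta_\infty\ra - \min_i y_i\la x_i, \bbeta_\infty\ra = \delta > 0$ at the limit to a uniform lower bound $\delta/2$ along the tail of the subsequence, which is immediate from joint continuity and the convergence $\bbeta_{t_s} \to \bbeta_\infty$. For a general exponential-tailed loss (rather than $\ell(z) = \exp(-z)$) one replaces the exact softmax identity by the equivalence $-\ell'(z) \sim \exp(-z)$ from \Cref{ass:loss}, which changes the constants but not the conclusion, since $q(\beta_t)$ is then the normalisation of the vector $(-\ell'((Z\beta_t)_i))_i$ and the same domination by the minimal coordinate applies up to a multiplicative constant bounded away from $0$ and $\infty$.
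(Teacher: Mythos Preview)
Your argument is correct and follows essentially the same route as the paper. The paper's proof (given for the more general loss in the appendix) bounds $q(\beta_t)_k$ by the ratio $\ell'(C_t\, y_k\langle x_k,\bbeta_t\rangle)/\ell'(C_t\, y_1\langle x_1,\bbeta_t\rangle)$ for an index $1$ attaining the minimum at $\bbeta_\infty$, then uses $\bbeta_t \to \bbeta_\infty$ to separate the two arguments by a fixed gap and concludes via the exponential tail of $-\ell'$; your version is the specialisation to $\ell(z)=e^{-z}$, with the cosmetic difference that you invoke continuity of $\bbeta \mapsto y_k\langle x_k,\bbeta\rangle - \min_i y_i\langle x_i,\bbeta\rangle$ directly rather than fixing a minimising index.
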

In words, coordinates of $q_\infty$ which do not correspond to support vectors of $\bbeta_\infty$ must be zero. Our goal is now to uniquely characterise $\bbeta_\infty$ as the solution of a maximum margin problem. 

\subsection{Warm-up: gradient flow } As a warm-up, let us consider standard gradient flow, which corresponds to mirror flow with potential $\phi = \Vert \cdot \Vert_2^2 / 2$. In this case, \Cref{eq:convergence_nablaphi} becomes $\beta_{t_s} / {t_s} \to Z^\top q_\infty$. Since the normalised iterates satisfy $\bbeta_{t_s} \to \bbeta_\infty$, we get
\begin{align*}
    \bbeta_\infty = \frac{Z^\top q_\infty}{\Vert Z^\top q_\infty \Vert_2}.
\end{align*}
Now notice that this equation along with the slackness conditions from Lemma \ref{lemma:main:comp_slackness} exactly correspond to the optimality conditions of the following convex minimisation problem:
\begin{align}\label{eq:maxmarginl2}
    \min_{\bbeta} \ \Vert \bbeta \Vert_2 \  \quad \text{under the constraint} \quad \min_{i \in [n]} \ y_i \langle x_i, \bbeta \rangle \geq 1.
\end{align}
Furthermore, the $\ell_2$-unit ball being strictly convex, Problem \eqref{eq:maxmarginl2}  has a unique solution to which $\bbeta_\infty$ must therefore be equal. Importantly, notice that Problem \eqref{eq:maxmarginl2} uniquely defines the limit of \textbf{any extraction} on the normalised iterates $\bbeta_t$:   the normalised iterates $\bbeta_t$ must therefore converge towards the $\ell_2$-maximum margin. We recover the implicit regularisation result from \cite{soudry2018implicit}: 
\begin{align*}
    \bbeta_\infty = \underset{ \min_{i} y_i \langle x_i, \bbeta \rangle \geq 1}{\argmin} \Vert \bbeta \Vert_2.
\end{align*}

\subsection{General potential: introducing the horizon function $\phi_\infty$} 
We now tackle general potentials $\phi$. In the general case,
the challenge of identifying the max-margin problem to which the iterates converge in direction stems from the fact that if the potential $\phi$ is not $L$-homogeneous\footnote{A function is $L$-homogeneous if there exists $L > 0$ such that $\phi(c \beta) = c^L \phi(\beta)$ for all $\beta$ and $c > 0$}, its geometry changes as the iterates diverge.
More precisely, its sub-level sets $S_c \coloneqq \{ \beta \in \R^d, \phi(\beta) \leq c \}$ change of shape as $c$ increase, as illustrated by Figure \ref{fig:sketch_potentials} (Left). 

However, we can hope that these sets have a \textbf{limiting shape} at infinity, meaning that the normalised sub-level sets $\bar{S}_c \coloneqq S_c / R_c$ where $R_c \coloneqq \max_{\beta \in S_c} \Vert \beta \Vert$  {converge} to some limiting convex set ${S}_\infty$ as $c \to \infty$. We can then construct an asymmetric norm\footnote{An asymmetric norm $p$ satisfies all the properties of a norm except the symmetry equality $p(-\beta) = p(\beta)$} $\phi_\infty$ which has ${S}_\infty$ as its unit ball. \textbf{In words, $\phi_\infty$ captures the shape of $\phi$ at infinity.}
\begin{figure}
    \centering
    \includegraphics[width=\textwidth]{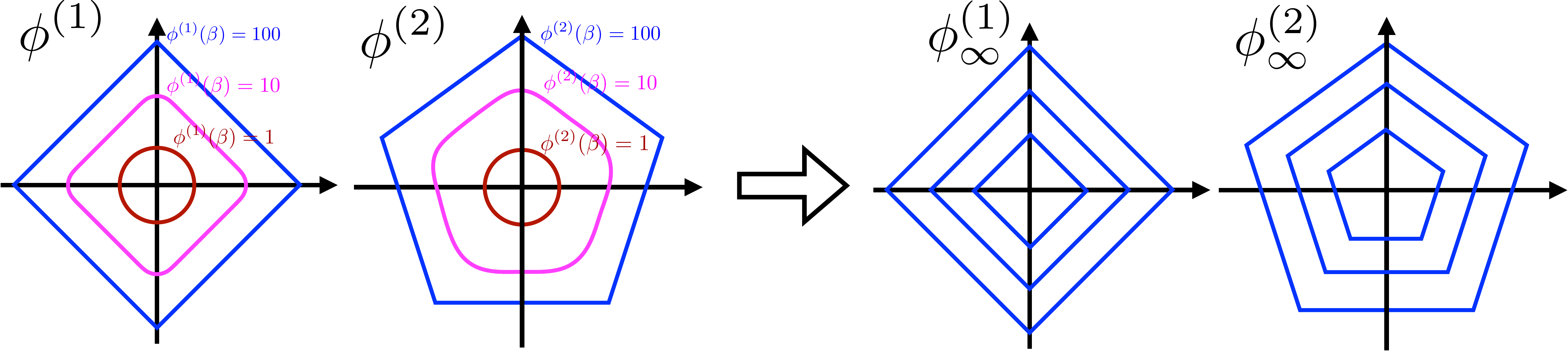}
    \caption{\textit{Left two:} Sketch of the level lines of two different potentials $\phi^{(1)}, \phi^{(2)} : \R^2 \to \R$. \textit{Right two:} Their corresponding horizon functions $\phi_\infty^{(1)}$, $\phi_\infty^{(2)}$ as defined in \Cref{ss:horizon_function}.}
    \label{fig:sketch_potentials}
\end{figure}
This informal construction is made rigorous in Section \ref{ss:horizon_function}. We state here the crucial consequence of this construction.

\begin{corollary}\label{thm:informal}
The horizon function $\phi_\infty$ is such that for any sequence $\beta_t$ diverging to infinity for which $\frac{\beta_t}{\Vert \beta_t \Vert}$ and $\frac{\nabla \phi(\beta_t)}{\Vert \nabla \phi(\beta_t) \Vert}$ both converge, then:
\begin{align*}
    \lim_{t \to \infty} \frac{\nabla \phi(\beta_t)}{\| \nabla \phi(\beta_t) \|} \in \lambda \cdot \partial \phi_\infty(\bbeta_\infty), \quad \text{where} \quad \bbeta_\infty = \lim_{t \to \infty} \frac{\beta_t}{\|\beta_t\|},
\end{align*}
for some strictly positive factor $\lambda$.
\end{corollary}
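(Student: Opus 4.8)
The plan is to recognise $\nabla\phi(\beta_t)$ as an outer normal to a sub-level set of $\phi$ and then pass to the limit using the set convergence $\bar S_c \to S_\infty$ that underlies the definition of $\phi_\infty$. Throughout I use the properties established by the construction made rigorous in \Cref{ss:horizon_function}: $S_\infty$ is a compact convex body with the origin in its interior, $\bar S_c \coloneqq S_c / R_c$ (where $S_c = \{\phi\le c\}$, $R_c \coloneqq \max_{\beta\in S_c}\norm{\beta}$) converges to $S_\infty$ as $c\to\infty$, and $\phi_\infty$ is the gauge of $S_\infty$, so $\phi_\infty(x)\le 1 \iff x\in S_\infty$ and $\phi_\infty(x)=1 \iff x\in\partial S_\infty$; write $\mathbb{B}$ for the closed unit ball of $\norm{\cdot}$. \emph{Step 1 (gradients are normals to rescaled sub-level sets).} Set $c_t \coloneqq \phi(\beta_t)$; since $\phi$ is coercive and $\norm{\beta_t}\to\infty$, $c_t\to\infty$ and $R_{c_t}<\infty$. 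The subgradient inequality gives $\langle\nabla\phi(\beta_t), z - \beta_t\rangle \le \phi(z) - \phi(\beta_t) \le 0$ for every $z\in S_{c_t}$; replacing $z$ by $R_{c_t}z$ and dividing by $R_{c_t}\norm{\nabla\phi(\beta_t)}>0$ yields, with $v_t \coloneqq \nabla\phi(\beta_t)/\norm{\nabla\phi(\beta_t)}$ and $x_t \coloneqq \beta_t/R_{c_t} = r_t\,\bbeta_t$, $\ r_t \coloneqq \norm{\beta_t}/R_{c_t}\in(0,1]$,
\begin{align}\label{eq:cor-normal-ineq}
\langle v_t,\, z - x_t\rangle \le 0 \qquad\text{for all } z\in\bar S_{c_t}.
\end{align}
Thus $v_t$ is a unit outer normal of $\bar S_{c_t}$ at its boundary point $x_t$.

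\emph{Step 2 (passing to the limit).} Extract a subsequence along which $r_t\to r_\infty\in[0,1]$; then $x_t\to x_\infty\coloneqq r_\infty\,\bbeta_\infty$, while $v_t\to v_\infty\coloneqq\lim_t v_t$ exists by hypothesis with $\norm{v_\infty}=1$. Write $S_\infty\subseteq\bar S_{c_t}+\eta_t\mathbb{B}$ with $\eta_t\downarrow 0$ (a consequence of $\bar S_{c_t}\to S_\infty$). For $z\in S_\infty$, decompose $z = z_t' + \eta_t w_t$ with $z_t'\in\bar S_{c_t}$ and $\norm{w_t}\le 1$; then \eqref{eq:cor-normal-ineq} gives $\langle v_t, z - x_t\rangle \le \eta_t$, and letting $t\to\infty$,
\begin{align}\label{eq:cor-limit-normal}
\langle v_\infty,\, z - x_\infty\rangle \le 0 \qquad\text{for all } z\in S_\infty.
\end{align}
Also $x_\infty\in S_\infty$, being a limit of points within $\eta_t$ of the closed set $S_\infty$. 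If $x_\infty$ lay in $\mathrm{int}\,S_\infty$, then $x_\infty + \delta v_\infty\in S_\infty$ for some $\delta>0$, contradicting \eqref{eq:cor-limit-normal}; hence $x_\infty\in\partial S_\infty$, so $\phi_\infty(x_\infty)=1$, and in particular $x_\infty\neq 0$ (as $0\in\mathrm{int}\,S_\infty$), i.e. $r_\infty>0$.

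\emph{Step 3 (from normal cone to $\partial\phi_\infty$).} By \eqref{eq:cor-limit-normal}, $s\coloneqq\langle v_\infty, x_\infty\rangle = \max_{z\in S_\infty}\langle v_\infty, z\rangle$, and since $0\in\mathrm{int}\,S_\infty$ and $\norm{v_\infty}=1$ this support value is strictly positive. Then $v_\infty/s$ satisfies $\langle v_\infty/s, z\rangle\le 1$ for all $z\in S_\infty$, i.e. $v_\infty/s$ lies in the polar $S_\infty^\circ$, and $\langle v_\infty/s, x_\infty\rangle = 1 = \phi_\infty(x_\infty)$; by the gauge–polar description of the subdifferential, $\partial\phi_\infty(x) = \{g\in S_\infty^\circ : \langle g,x\rangle = \phi_\infty(x)\}$, this gives $v_\infty/s\in\partial\phi_\infty(x_\infty)$. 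Finally, $\partial\phi_\infty$ is $0$-homogeneous (because $\phi_\infty$ is $1$-homogeneous) and $x_\infty = r_\infty\,\bbeta_\infty$ with $r_\infty>0$, so $\partial\phi_\infty(x_\infty) = \partial\phi_\infty(\bbeta_\infty)$. Therefore $v_\infty\in s\cdot\partial\phi_\infty(\bbeta_\infty)$, which is the claim with $\lambda = s > 0$.

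\emph{Main obstacle.} The delicate part is Step 2: passing to the limit in \eqref{eq:cor-normal-ineq} is precisely an outer-semicontinuity statement for the normal-cone map along the converging sub-level sets, and making it airtight requires the qualitative features of $S_\infty$ — nondegeneracy and the origin in its interior — supplied by the construction of \Cref{ss:horizon_function}; these are also what prevent the boundary points $x_t$ from collapsing to the origin, so that $r_\infty>0$ and $x_\infty$ is a genuine (nonzero) boundary point of $S_\infty$. The remaining ingredients (the subgradient inequality for $\nabla\phi$, scale-invariance of normal cones, and the gauge–polar duality used in Step 3) are standard convex geometry. A secondary point worth care is that the conclusion concerns the full limit $v_\infty$, which exists by hypothesis, so the subsequence extracted to control $r_t$ is harmless — indeed $r_\infty = 1/\phi_\infty(\bbeta_\infty)$ is forced, so $r_t$ converges on its own.
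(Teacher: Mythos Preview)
Your proof is correct and follows the same conceptual arc as the paper's: identify $\nabla\phi(\beta_t)$ as a normal to the (rescaled) sub-level set at the appropriately normalised point, pass to the limit using the convergence $\bar S_c \to S_\infty$, and then convert the resulting normal-cone membership into a subgradient of the gauge $\phi_\infty$. The substantive difference is in how the limit is taken. The paper frames the passage as epi-convergence of the indicators $I_{\bar S_c} \to I_{S_\infty}$ and invokes Attouch's theorem to deduce Painlev\'e--Kuratowski convergence of the subdifferential graphs (i.e.\ of the normal-cone maps); you instead pass to the limit by hand in the supporting-hyperplane inequality $\langle v_t, z - x_t\rangle \le 0$, using only the Hausdorff inclusion $S_\infty \subseteq \bar S_{c_t} + \eta_t\mathbb{B}$. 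Your route is more elementary and self-contained --- nothing beyond basic convex geometry --- while the paper's appeal to Attouch's theorem packages the same content in one citation and would extend more readily (for instance to nonsmooth $\phi$). Both arguments then finish identically: use $0\in\mathrm{int}\,S_\infty$ to show the limit point sits on $\partial S_\infty$ and is nonzero, and translate normal-cone membership into $\partial\phi_\infty$ --- you via the explicit gauge--polar description of the subdifferential, the paper via Rockafellar's Corollary~23.7.1.
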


Using this construction, we can derive the optimality conditions satisfied by $\bbeta_\infty$. From the convergence in \Cref{eq:convergence_nablaphi} and that of $\bbeta_t \to \bbeta_\infty$, applying Corollary \ref{thm:informal}, we obtain that:
\begin{align*}
     Z^\top q_\infty \in \lambda \cdot \partial \phi_\infty(\bbeta_\infty).
\end{align*}
Up to a positive multiplicative factor (which is irrelevant due to the positive homogeneity of the quantities involved), this condition along with Lemma \ref{lemma:main:comp_slackness} are exactly the optimality conditions of the convex problem
\begin{align*}
    \min_{\bbeta \in \R^d} \ \ \phi_\infty(\bbeta)   \quad \text{under the constraint} \quad  \min_{i \in [n]} \ y_i \langle x_i, \bbeta \rangle \geq 1.
\end{align*}
The limiting direction $\bbeta_\infty$ must therefore belong to the set of its solutions. Assuming that this set contains a single element of norm 1 (we refer to the next section for comments concerning the uniqueness), we deduce that the iterates $\bbeta_t$ must converge towards it:
\begin{align*}
    \lim_{t \rightarrow \infty}\frac{\beta_t}{\|\beta_t\|} \propto \underset{\min_{i} y_i \langle x_i, \bbeta \rangle \geq 1}{\argmin} \ \  \phi_\infty(\bbeta)  .
\end{align*}

\section{Main result: directional convergence towards the $\phi_\infty$-max margin}
\label{s:main_result}
We now state our formal results, starting with the precise construction of the horizon function $\phi_\infty$, followed by the theorem showing convergence of the iterates towards the $\phi_\infty$-max-margin.

\subsection{Construction of the horizon function $\phi_\infty$}
\label{ss:horizon_function}

We first define the \textbf{horizon shape} of a potential $\phi$, and provide sufficient conditions for its existence. Then, we use this shape to construct a \textbf{horizon function} $\phi_\infty$, which allows the interpretation of the directional limits of gradients of $\phi$ at infinity. The proofs require technical elements from variational analysis to ensure that the limits are well-defined; these are deferred to \Cref{app:phi_infty_proofs}.

\paragraph{Horizon shape.} 
Assume w.l.o.g. that $\phi(0) = 0$.  
For $c \geq 0$, consider the sublevel set:
\[
    S_c(\phi) = \{ \beta \in \R^d \,:\, \phi(\beta) \leq c\},
\]
which is nonempty and compact by coercivity of $\phi$. We can then define the normalised sublevel set:
\begin{equation}\label{def:sbar}
    \bar S_c = \frac{1}{R_c} S_c,
    \quad 
    R_c = \max\{\|\beta\|\,:\, \beta \in S_c\}.
\end{equation}
By construction, the set $\bar S_c$ belongs to the unit ball. We are interested in the limit of $\bar S_c$ as $c \rightarrow \infty$.

\begin{figure}
    \centering
    \includegraphics[width=\textwidth]{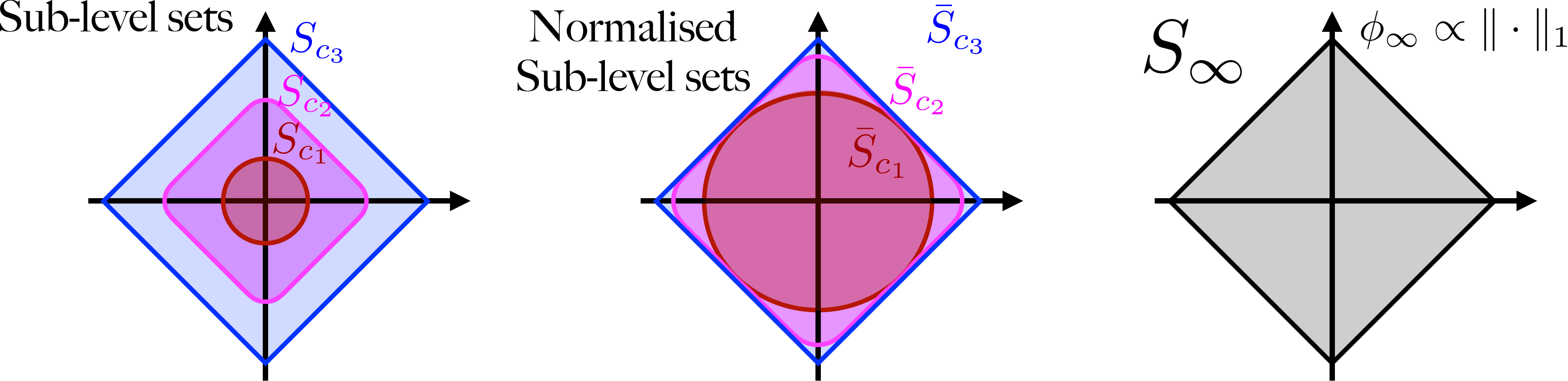}
    \caption{Illustration of the construction of the horizon shape $S_\infty$. \textit{Left:} the sub-level sets $S_c$ change of shape and are increasing. \textit{Middle:} in order to avoid the shapes blowing up, we normalise them to keep them in the unit ball (here we choose the arbitrary constraining norm to be the $\ell_1$-norm). \textit{Right}: the normalised sub-level sets $\bar{S}_c$ converge to a limiting set $S_\infty$ for the Hausdorff distance. }
    \label{fig:S_c}
\end{figure}

\begin{definition}\label{def:horizon_shape}
    We say that $\phi$ admits a horizon shape if the family of normalized sublevel sets $(\bar S_c)_{c >0}$ defined in \Cref{def:sbar} converges to some compact set $S_\infty$ as $c \rightarrow \infty$ for the Hausdorff distance.
    In addition, we say that this shape is non-degenerate if the origin belongs to the interior of $S_\infty$.
\end{definition}

The Hausdorff distance is a natural distance on compact sets~\citep[see][Section 4.C., for a definition]{Rockafellar1998}. In Proposition \ref{prop:existence_horizon_shape}, we prove the existence of the horizon shape for a large class of functions which contains all the potentials with domain $\R^d$ encountered in practice. Although the horizon shape is guaranteed to exist for most functions, we cannot a priori prove that it is non-degenerate, as the normalized sub-levels $\bar S_c$ can become `flat' as $c\rightarrow \infty$.\footnote{Consider for instance $\phi(x,y) = x^2 + y^4$ on $\R^2$, for which the horizon shape is $[-1,1] \times \{0\}$.}
Given the technical complexity associated with this case, we now focus exclusively on non-degenerate horizon shapes.

\paragraph{Horizon function.} If $\phi$ admits a non-degenerate horizon shape $S_\infty$, we define its horizon function as the \textit{Minkowski gauge}~\citep[][Section 11.E]{Rockafellar1998} of $S_\infty$:
\[
    \phi_\infty(\bbeta) \coloneqq \inf\,\{r >0\,:\, \frac{\bbeta}{r} \in  S_\infty\} \quad \text{ for }  \bbeta \in \R^d.
\]
By construction, the horizon function $\phi_\infty$ is an asymmetric norm and its sub-level sets correspond to scaled versions of $S_\infty$~\citep[see][Section 11.C, for more properties]{Rockafellar1998}. For example, in the case of the horizon shape $S_\infty$ illustrated in \Cref{fig:S_c}, the corresponding horizon function $\phi_\infty$ is proportional to the $\ell_1$-norm. 
Although the construction of $\phi_\infty$ presented here is rather abstract, we show in \Cref{thm:separable} that for separable potentials defined over $\R^d$, it can be computed with an explicit formula.  Though different, our definition of the horizon function shares many similarities with the classical concept of horizon function from convex analysis \citep{Rockafellar1998}. 
We discuss the links between the two notions at the end of \Cref{section:existence_phi_inf}.

\subsection{Main result: directional convergence of the iterates towards the $\phi_\infty$-max-margin}

We can now state our main result which fully characterises the directional convergence of mirror flow.
\begin{restatable}{theorem}{maintheorem}\label{thm:main}
Assume that $\phi$ admits a non-degenerate horizon shape and let $\phi_\infty$ be its horizon function. Assuming that the following $\phi_\infty$-max-margin problem has a unique minimiser,  then the mirror flow normalised iterates $\bbeta_t = \frac{\beta_t}{\Vert \beta_t \Vert}$ converge towards a vector $\bbeta_\infty$ and 
\begin{align*}
    \bbeta_\infty \propto \underset{\bbeta\in \R^d}{\arg \min} \  \phi_\infty(\bbeta) \quad \text{under the constraint} \quad \min_{i \in [n]} \ y_i \langle x_i, \bbeta \rangle \geq 1,
\end{align*}
where the symbol $\propto$ denotes positive proportionality.
\end{restatable}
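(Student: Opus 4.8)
The strategy is to carry out the heuristic argument of Section~\ref{section:intuitive_constr} rigorously, with the technical work concentrated in three places: (i) the compactness/extraction argument, (ii) the passage to the limit in the rescaled mirror flow via Corollary~\ref{thm:informal}, and (iii) the uniqueness of the limit point. First, by Proposition~\ref{prop:loss_to_0} the iterates diverge and the loss vanishes, so the time rescaling $\theta(t) = \int_0^t L(\beta_s)\,\dd s$ is a bijection of $\R_{\geq 0}$ (Lemma~\ref{app:lemma:diverging_loss}); after replacing $\beta_t$ by the accelerated iterates we obtain the integrated identity~\eqref{eq:integrated_MF}. Fix an arbitrary sequence $t_n \to \infty$. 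The pair $(\bbeta_{t_n}, q(\beta_{t_n}))$ lives in the product of the unit sphere and the simplex $\Delta_n$, both compact, so Bolzano--Weierstrass yields a subsequence converging to some $(\bbeta_\infty, q_\infty)$. The Cesàro average $\tfrac{1}{t}\int_0^t q(\beta_s)\,\dd s$ converges along this subsequence to the same $q_\infty$, and since $\tfrac{1}{t}\nabla\phi(\beta_0)\to 0$, equation~\eqref{eq:integrated_MF} gives $\tfrac{1}{t_n}\nabla\phi(\beta_{t_n}) \to Z^\top q_\infty$.

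Second, I would derive the KKT system for $\bbeta_\infty$. From $\tfrac{1}{t_n}\nabla\phi(\beta_{t_n}) \to Z^\top q_\infty$ one reads off that $\nabla\phi(\beta_{t_n})/\|\nabla\phi(\beta_{t_n})\|$ converges (to $Z^\top q_\infty / \|Z^\top q_\infty\|$, once one checks $Z^\top q_\infty \neq 0$ — this follows because otherwise $\nabla\phi(\beta_{t_n})/t_n \to 0$, which combined with Assumption~\ref{ass:potential}.4 and the divergence of $\|\beta_{t_n}\|$ needs a short argument, or alternatively one argues directly that $q_\infty$ cannot be orthogonal to the row space since the data is separable). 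Having both $\bbeta_{t_n} \to \bbeta_\infty$ and $\nabla\phi(\beta_{t_n})/\|\nabla\phi(\beta_{t_n})\|$ convergent, Corollary~\ref{thm:informal} applies and gives $Z^\top q_\infty \in \lambda\,\partial\phi_\infty(\bbeta_\infty)$ for some $\lambda > 0$. Together with Lemma~\ref{lemma:main:comp_slackness} — which states $(q_\infty)_k = 0$ whenever $y_k\langle x_k,\bbeta_\infty\rangle$ strictly exceeds the margin $\gamma_\infty \coloneqq \min_i y_i\langle x_i,\bbeta_\infty\rangle$ — and the fact that $\gamma_\infty > 0$ (the margin stays bounded away from zero along the flow; this uses separability and a monotonicity argument on the loss, or the fact that $\limt L(\beta_t)=0$ forces all margins $y_i\langle x_i,\beta_t\rangle \to +\infty$), we get that $\bbeta_\infty / \gamma_\infty$ is feasible for the $\phi_\infty$-max-margin problem, that $q_\infty/(\lambda)$ is a valid dual multiplier supported on the active constraints, and that the stationarity condition $Z^\top(\text{dual}) \in \partial\phi_\infty(\,\cdot\,)$ holds. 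By positive homogeneity of $\phi_\infty$ and of the constraint, these are precisely the KKT conditions of the convex program $\min \phi_\infty(\bbeta)$ s.t. $\min_i y_i\langle x_i,\bbeta\rangle \geq 1$, evaluated at $\bbeta_\infty/\gamma_\infty$; since the program is convex with a feasible point (separability) and Slater-type qualification holds (the constraint is affine), KKT is sufficient for global optimality, so $\bbeta_\infty/\gamma_\infty$ is a minimiser.

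Third, uniqueness and upgrading to full convergence. By hypothesis the $\phi_\infty$-max-margin problem has a unique minimiser $\bbeta^\star$; hence $\bbeta_\infty / \gamma_\infty = \bbeta^\star$, i.e. $\bbeta_\infty = \bbeta^\star/\|\bbeta^\star\|$ is the same unit vector regardless of the chosen sequence $t_n$ and subsequence. A standard subsequence argument then promotes this to convergence of the full trajectory: if $\bbeta_t$ did not converge to $\bbeta^\star/\|\bbeta^\star\|$, there would be a sequence $t_n \to \infty$ with $\bbeta_{t_n}$ bounded away from it, and extracting a convergent sub-subsequence would contradict what we just proved. This yields $\bbeta_t \to \bbeta_\infty \propto \bbeta^\star$, which is the claim.

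\textbf{Main obstacle.} I expect the delicate points to be (a) justifying that the \emph{normalised} gradients $\nabla\phi(\beta_{t_n})/\|\nabla\phi(\beta_{t_n})\|$ genuinely converge and that the limit is $Z^\top q_\infty$ up to scaling — this requires ruling out $Z^\top q_\infty = 0$ and controlling the norm $\|\nabla\phi(\beta_{t_n})\|$ relative to $t_n$, which is exactly where Assumption~\ref{ass:potential}.4 (coercive gradient) enters — and (b) checking that the limiting objects assembled above really do satisfy the KKT system in the form needed, including the primal feasibility $\gamma_\infty > 0$ and the complementary slackness in the presence of a non-smooth $\phi_\infty$ (so one must work with subdifferentials and be careful that $\partial\phi_\infty$ behaves well under the positive rescaling relating $\bbeta_\infty$ and $\bbeta^\star$). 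Everything else — the extraction, the Cesàro step, the subsequence-to-full-convergence upgrade — is routine once Corollary~\ref{thm:informal} is in hand.
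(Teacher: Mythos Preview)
Your proposal is correct and follows essentially the same approach as the paper's own proof: time-rescale the flow, extract a convergent subsequence of $(\bbeta_t,q_t)$, pass to the limit via the formal version of Corollary~\ref{thm:informal} (Corollary~\ref{cor:epiconvergence}), read off the KKT conditions of the $\phi_\infty$-max-margin problem using Lemma~\ref{lemma:main:comp_slackness}, and conclude by uniqueness. The only minor discrepancy is that for general losses satisfying Assumption~\ref{ass:loss} the paper's formal proof performs the time change $\theta(t)=\int_0^t a_s\,\dd s$ with $a_s=-\ell'\bigl(\ell^{-1}(L(\beta_s))\bigr)$ rather than $\int_0^t L(\beta_s)\,\dd s$; the two coincide only for the pure exponential loss, so your write-up should use the former to cover the full setting.
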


\paragraph{Remark on the uniqueness of the margin problem.} If the unit ball of $\phi_\infty$ is strictly convex, then the $\phi_\infty$-max-margin problem has a unique solution. However, in the general case, there may exist an infinity of solutions and weak but ad hoc assumptions on the dataset are required to guarantee uniqueness. For instance, if $\phi_\infty$ is proportional to the $\ell_1$-norm, a common assumption which ensures uniqueness is assuming that the data features are in \textit{general position}~\citep{dossal2012necessary}.

\subsection{Assumptions guaranteeing the existence of $\phi_\infty$}\label{section:existence_phi_inf}

Our main result, presented in \Cref{thm:main} relies on the existence of a horizon shape, $S_\infty$, as described in Definition \ref{def:horizon_shape}. From this shape, the asymmetric norm $\phi_\infty$ is constructed. 

We  show here that the existence of $S_\infty$ is ensured for a large class of `nice' functions, specifically those \textit{definable in o-minimal structures}~\citep{Dries1998}. 
For the reader unfamiliar with this notion, this class contains all `reasonable' functions used in practice, such as polynomials, logarithms, exponentials, and `reasonable' combinations of those. 
This is a typical assumption used for instance to prove the convergence of optimisation methods through the Kurdyka–Łojasiewicz property~\citep{Attouch2011}.

\begin{restatable}{proposition}{existencehorizonshape}\label{prop:existence_horizon_shape}
    If any of the three following conditions hold: \textit{(i)} $\phi$ is a finite composition of polynomials, exponentials and logarithms, 
    \textit{(ii)} $\phi$ is globally subanalytic, \textit{(iii)} $\phi$ is definable in a o-minimal structure on $\R$; then $\phi$ admits a horizon shape $S_\infty$.
\end{restatable}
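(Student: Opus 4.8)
The plan is to prove existence of the horizon shape $S_\infty$ by exploiting definability to turn the family of normalised sublevel sets $(\bar S_c)_{c>0}$ into a definable set-valued map and then invoking the monotonicity/limit structure that o-minimality guarantees. Since conditions \textit{(i)} and \textit{(ii)} are special cases of \textit{(iii)} (the $\mathbb{R}_{\mathrm{an},\exp}$ structure contains finite compositions of polynomials, exponentials, logarithms, and globally subanalytic functions are definable in $\mathbb{R}_{\mathrm{an}}$), it suffices to treat a general o-minimal structure. First I would observe that $\phi$ being definable, the set $\{(\beta,c) : \phi(\beta)\le c\}$ is definable, hence so are $S_c$, the quantity $R_c = \max\{\|\beta\| : \beta\in S_c\}$ (a definable function of $c$ by definability of suprema over definable families), and therefore the rescaled family $\bar S_c = R_c^{-1} S_c$, viewed as a definable subset of $\R^d \times \R_{>0}$ via $\{(\gamma, c) : R_c\gamma \in S_c\}$.

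The key analytic step is to work in the space of compact subsets of the fixed $\|\cdot\|$-unit ball $B$ equipped with the Hausdorff distance, which is a compact metric space. Each $\bar S_c$ is a nonempty compact convex subset of $B$ that touches the unit sphere (by the normalisation). I would then use the curve selection lemma / monotonicity theorem for o-minimal structures: the map $c \mapsto \bar S_c$, while not a priori a definable curve in a finite-dimensional space, can be analysed coordinate-wise through its support function $h_c(u) = \max_{\gamma \in \bar S_c}\langle u, \gamma\rangle$. For each fixed direction $u$ on the sphere, $c \mapsto h_c(u)$ is a bounded definable function of $c$, so by the monotonicity theorem it is eventually monotone, hence has a limit $h_\infty(u)$ as $c\to\infty$. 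One then checks that $h_\infty$ is a support function of some compact convex set $S_\infty$ (positive homogeneity and subadditivity pass to the limit, and $h_\infty$ is finite and upper bounded by $1$), and that pointwise convergence of support functions of a uniformly bounded family of compact convex sets is equivalent to Hausdorff convergence — a standard fact (see, e.g., \citet{Rockafellar1998}, Section 4.C and the Wijsman/Hausdorff correspondence for bounded families). This yields $\bar S_c \to S_\infty$ in the Hausdorff distance, which is exactly Definition \ref{def:horizon_shape}.

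One subtlety to handle carefully: $h_\infty$ being the limit of support functions of sets inside $B$, the resulting $S_\infty$ is compact and contained in $B$, and it is nonempty because $\|\beta_c\| = 1$ for the maximiser $\beta_c/R_c$ of the norm on $\bar S_c$, so a Hausdorff-accumulation point of these maximisers lies on the sphere and in $S_\infty$; this prevents $S_\infty$ from degenerating to the empty set (non-degeneracy in the stronger sense of Definition \ref{def:horizon_shape}, i.e. $0 \in \mathrm{int}\,S_\infty$, is explicitly \emph{not} claimed here and is assumed separately in \Cref{thm:main}). I would also record that convexity of each $\bar S_c$ (inherited from convexity of $\phi$) is preserved in the Hausdorff limit, so $S_\infty$ is convex.

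The main obstacle I anticipate is the passage from ``$c \mapsto h_c(u)$ is eventually monotone for each fixed $u$'' to a clean uniform statement giving Hausdorff convergence: pointwise convergence of support functions does give Hausdorff convergence for a uniformly bounded family, but making this rigorous requires either invoking the equivalence of Hausdorff convergence with pointwise convergence of support functions on bounded sets, or upgrading pointwise to uniform convergence on the sphere. For the latter one can use that the support functions $h_c$ are uniformly Lipschitz on the sphere (all sets sit inside $B$, so $\mathrm{Lip}(h_c)\le 1$), and an equicontinuity/Arzelà–Ascoli argument then promotes pointwise to uniform convergence — at which point the correspondence with Hausdorff distance is immediate. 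A secondary, more bookkeeping-heavy obstacle is confirming that $R_c$ and the various suprema are genuinely definable (so that the monotonicity theorem applies to $c\mapsto h_c(u)$); this follows from the standard fact that in an o-minimal structure, $\sup$ of a definable family of bounded definable functions is definable, but it must be stated. The remaining steps — compactness, convexity, non-emptiness of $S_\infty$ — are routine once the convergence is established.
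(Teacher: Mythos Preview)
Your proposal is correct and takes a genuinely different route from the paper. The paper argues by compactness of the hyperspace of compact subsets of the unit ball (Hausdorff metric), extracts a subsequential limit $\bar S$, and then invokes a result of Kocel--Cynk on Hausdorff limits of definable families of compact sets to conclude that the full family $\bar S_{t^{-1}}$ converges to $\bar S$ as $t\to 0$. In contrast, you bypass the Kocel--Cynk black box entirely: by passing to support functions $h_c(u)=\sup_{\gamma\in\bar S_c}\langle u,\gamma\rangle$, you reduce the problem to one-variable definable functions $c\mapsto h_c(u)$, apply the elementary monotonicity theorem to get a pointwise limit, and then use the uniform $1$-Lipschitzness of the $h_c$ plus Arzel\`a--Ascoli to upgrade to uniform convergence on the sphere, which is exactly Hausdorff convergence for compact convex sets.

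What each approach buys: your argument is more elementary on the o-minimal side (only the monotonicity theorem, no specialised limit-of-definable-families result), at the price of using convexity of the $\bar S_c$ in an essential way (the support-function/Hausdorff correspondence fails for non-convex sets). The paper's argument does not need convexity and would apply to any definable family of compacta, but leans on a less widely known reference. Both are complete for the statement at hand since $\phi$ is assumed convex.
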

The proof is technical and we defer it to \Cref{app:phi_infty_proofs}. 
Although the previous proposition ensures the existence $\phi_\infty$ for a wide range of potentials, it does not offer a direct method for computing it. 
In the following, we show that for potentials that are both separable and even,  a simple formula  exists, allowing for the direct calculation of $\phi_\infty$. 
\begin{ass}\label{ass:separable}
The potential $\phi$ is separable, in the sense that there exists $\varphi : \R \to \R_{\geq 0}$  such that $\phi(\beta) = \sum_{i = 1}^d \varphi(\beta_i)$.  We assume that $\varphi$ satisfies \Cref{ass:potential}, that it is definable in a o-minimal structure on $\R$ and that it is an even function. W.l.o.g. we assume that $\varphi(0) = 0$.
\end{ass}
We note that $\varphi$ is a bijection over $\R_{\geq 0}$, and denote by $\varphi^{-1}$ its inverse. We consider the function $\varphi^{-1}\circ \phi$, which can be seen as a renormalisation of $\phi$. It has the same level sets as $\phi$ and ensures that $\lim_{\eta \to 0} \eta \varphi^{-1}(\phi(\bbeta / \eta))$ exists in $\R_{> 0}$ for all $\bbeta$. 
These two observations lead to the following theorem.

\begin{restatable}{theorem}{thmSeparable}\label{thm:separable}
Under Assumption \ref{ass:separable}, the potential $\phi$ admits a non-degenerate horizon shape and its horizon function is such that there exists $\lambda > 0$ such that for every $\bbeta \in \R^d$:
\begin{align*}
    \phi_\infty(\bbeta) = \lambda  \lim_{\eta \to 0} \eta \cdot \varphi^{-1}\left(\phi\left(\frac{\bbeta }{\eta}\right)\right).
\end{align*}
\end{restatable}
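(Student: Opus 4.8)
The plan is to prove Theorem~\ref{thm:separable} by directly computing the limiting shape of the normalised sublevel sets and identifying it with the unit ball of the claimed gauge. First I would fix notation: set $g \coloneqq \varphi^{-1}\circ\phi$, which has the same sublevel sets as $\phi$ since $\varphi^{-1}$ is a strictly increasing bijection of $\R_{\geq 0}$ onto itself (here I use that $\varphi \geq 0$, even, strictly convex, coercive, with $\varphi(0)=0$, so $\varphi$ restricted to $\R_{\geq 0}$ is an increasing bijection). Thus $S_c(\phi) = \{g \leq \varphi^{-1}(c)\} = S_{\varphi^{-1}(c)}(g)$, and it suffices to understand the normalised sublevel sets of $g$ as the level goes to infinity. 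Define, for $\bbeta \in \R^d$, the candidate function $\psi(\bbeta) \coloneqq \lim_{\eta \to 0^+} \eta\, g(\bbeta/\eta) = \lim_{\eta\to 0^+} \eta\, \varphi^{-1}(\phi(\bbeta/\eta))$. The first substantive step is to show this limit exists in $\R_{>0}$ for every $\bbeta \neq 0$ (and equals $0$ at $\bbeta = 0$), is finite, and defines an asymmetric norm. Existence and the fact that the limit is a genuine limit (not merely a liminf/limsup) is exactly where definability is used: by \Cref{prop:existence_horizon_shape} applied coordinatewise, or more directly by a monotonicity/regular-variation argument, the map $\eta \mapsto \eta\,\varphi^{-1}(\phi(\bbeta/\eta))$ is definable in one variable, hence has a well-defined one-sided limit at $0$ (possibly $+\infty$ a priori); one then rules out $0$ and $+\infty$ using coercivity of $\varphi$ and the growth bound that follows. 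Positive homogeneity of $\psi$ is immediate from the definition (replace $\bbeta$ by $t\bbeta$ and rescale $\eta$), and the triangle inequality follows from convexity of $g$: $g$ convex with $g(0)=0$ gives $g((\bbeta_1+\bbeta_2)/\eta) \leq \tfrac12 g(2\bbeta_1/\eta) + \tfrac12 g(2\bbeta_2/\eta)$, and passing to the limit yields $\psi(\bbeta_1+\bbeta_2) \leq \psi(\bbeta_1) + \psi(\bbeta_2)$.

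Next I would identify the sublevel sets. The key computation is that $\{\psi \leq 1\} = \lim_{c\to\infty} \bar S_c$ in the Hausdorff sense, up to the normalising scalar $R_c$. Concretely, fix a sublevel $\{g \leq m\}$ for large $m$; I claim that for $\bbeta$ on its boundary, $g(\bbeta) = m$ forces $\psi(\bbeta)$ to be close to $m$ in the appropriate scaling sense. More precisely, using separability and the substitution $\eta = 1/\varphi^{-1}(m)$ one shows $\{g \leq m\} = \{\bbeta : \sum_i \varphi(\bbeta_i) \leq m\}$, and after dividing by $\varphi^{-1}(m)$ this set converges to $\{\bbeta : \sum_i \lim_{\eta\to 0}[\text{appropriate limit}] \leq 1\}$; the separable structure lets me pass the limit through the finite sum coordinatewise, which is the crucial simplification that the general (non-separable) case lacks. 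This shows $\bar S_c$, rescaled by $R_c/\varphi^{-1}(c)$ rather than directly, converges to $\{\psi \leq 1\}$; since $\{\psi\leq 1\}$ is a compact convex set containing $0$ in its interior (interiority from $\psi < \infty$ everywhere, compactness from $\psi > 0$ away from $0$ together with homogeneity), the horizon shape exists and is non-degenerate, with $S_\infty$ a positive multiple of $\{\psi \leq 1\}$. The multiplicative constant is absorbed because $R_c$ is defined via $\max\|\cdot\|$ over the true sublevel set, whereas my argument naturally produces the rescaling by $\varphi^{-1}(c)$; the ratio $R_c \cdot (\text{something})/\varphi^{-1}(c)$ converges to a finite positive constant $1/\lambda$, giving $S_\infty = \lambda^{-1}\{\psi \leq 1\}$ and hence $\phi_\infty = \lambda\,\psi$ by taking Minkowski gauges.

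I expect the main obstacle to be the rigorous justification that the pointwise limit defining $\psi$ both exists and, crucially, that the convergence $\bar S_c \to S_\infty$ in Hausdorff distance genuinely follows from the pointwise convergence of the gauges. Pointwise convergence of convex functions does imply local uniform convergence and convergence of sublevel sets, but one must check the normalised sublevel sets do not degenerate or escape to infinity along the way — this is where non-degeneracy ($0 \in \operatorname{int} S_\infty$), uniform boundedness of $\bar S_c$ (automatic, they live in the unit ball), and a uniform lower bound keeping $0$ in the interior must all be combined. The separability assumption does the heavy lifting here: it reduces everything to the scalar study of $\eta \mapsto \eta\,\varphi^{-1}(\varphi(a/\eta))$ for $a \in \R$, whose limit can be analysed via the regular-variation / definability properties of the one-dimensional function $\varphi$, and then reassembled additively. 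A secondary technical point is handling the asymmetry cleanly (since $\varphi$ is even, $\psi$ is actually symmetric, so in fact $\phi_\infty$ here is a genuine norm — worth remarking, and it simplifies the gauge identification). I would structure the writeup as: (1) reduce to $g = \varphi^{-1}\circ\phi$; (2) establish existence, finiteness, positivity, homogeneity and subadditivity of $\psi$ via convexity and definability; (3) show coordinatewise that the rescaled sublevel sets converge to $\{\psi \leq 1\}$; (4) conclude $S_\infty$ exists, is non-degenerate, and compute $\phi_\infty = \lambda\psi$ through Minkowski gauges, fixing $\lambda$ by the normalisation $R_\infty \coloneqq \max\{\|\bbeta\| : \bbeta \in S_\infty\} = 1$.
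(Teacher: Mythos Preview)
Your overall architecture matches the paper's: define $h_\eta(\bbeta)=\eta\,\varphi^{-1}(\phi(\bbeta/\eta))$, establish a pointwise limit $h_0$ via definability (the monotonicity lemma), identify $\{h_{\eta_c}\le 1\}=\eta_c S_c$ with $\eta_c=1/\varphi^{-1}(c)$, pass to the limit in the sublevel sets, and conclude $\phi_\infty\propto h_0$ from the fact that $\bar S_c$ and $\eta_c S_c$ differ by the scalar $R_c\eta_c$. So the skeleton is right.

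There is, however, a genuine gap in your subadditivity step. You write that the triangle inequality for $\psi$ ``follows from convexity of $g$'', but $g=\varphi^{-1}\circ\phi$ is \emph{not} convex in general: $\varphi^{-1}$ is concave (it is the inverse of a convex increasing function on $\R_{\ge 0}$), and a concave transformation of a convex function need not be convex. Concretely, take $d=2$ and $\varphi(x)=e^{x^2}-1$; a direct Hessian computation at $\beta=(1,1)$ shows $\nabla^2 g$ has a negative eigenvalue in the direction $(1,1)$. What \emph{is} true is that $g$ (and each $h_\eta$) is quasi-convex, since its sublevel sets are rescalings of the convex sets $S_c(\phi)$. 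That is enough for the conclusion, but not via the inequality you wrote.

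The paper avoids this issue altogether and this is also where separability is actually used, rather than in a ``coordinatewise passage to the limit''. From separability one computes $\partial_k h_\eta(\beta)=\varphi'(\beta_k/\eta)/\varphi'\!\big(\varphi^{-1}(\sum_i\varphi(\beta_i/\eta))\big)\in[-1,1]$, so the family $(h_\eta)_{\eta>0}$ is uniformly $1$-Lipschitz in $\ell_\infty$. Combined with the pointwise limit from definability, this yields joint continuity of $(\eta,\beta)\mapsto h_\eta(\beta)$ at $\eta=0$, hence epi-convergence $h_\eta\to h_0$, hence Painlev\'e--Kuratowski convergence of the (uniformly bounded, since $h_\eta(\beta)\ge\|\beta\|_\infty$) sublevel sets $\{h_\eta\le 1\}\to\{h_0\le 1\}$. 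Convexity of the limit set, and therefore the asymmetric-norm structure of $h_0$, then comes for free from convexity of each $\{h_\eta\le 1\}$. If you want to keep your route, replace the ``$g$ convex'' claim by: the sublevel sets of $h_\eta$ are convex, their Hausdorff limit is convex, and a positively homogeneous function whose unit sublevel set is convex and contains $0$ in its interior is automatically sublinear.
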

We use this simple formula when computing $\phi_\infty$ for various potentials in the next section.

\paragraph{Remark on previous notions of horizon function.}
In the convex analysis literature, the horizon function is typically defined as $\phi_{\infty}(\bbeta)=\lim_{\eta\rightarrow 0} \eta \phi(\bar{\beta} / \eta)$ \citep{Rockafellar1998,Laghdir1999}. In our context, this definition would yield a function which equals $+\infty$ everywhere except at the origin.
In contrast, our definition ensures that $\phi_\infty$ attains finite values over $\R^d$. The distinction stems from our way of normalising the level sets by $R_c$ in Section \ref{ss:horizon_function}, or alternatively, from the composition by $\varphi^{-1}$ in the separable case. The two constructions would coincide only if $\phi$ was Lipschitz continuous, which is at odds with Assumption \ref{ass:potential}.

\section{Applications and experiments}
\label{section:experiments}
In this section, we illustrate our main result using various potentials.

\paragraph{Homogeneous potentials.} We first consider  potentials $\phi$ which are $L$-homogeneous, i.e., there exists $L > 0$ such that for all $c > 0$ and $\beta \in \R^d$, $\phi(c \beta) = c^L \phi(\beta)$. In this case, the sublevel sets $\bar{S}_c$ are all equal. It follows that $\phi_\infty \propto \phi^{1 / L}$. An important example is the case of $\phi = \Vert \cdot \Vert_p^p$  where $\Vert \cdot \Vert_p$ corresponds to the $\ell_p$-norm  with $p > 1$, for which we get that $\phi_\infty \propto \Vert \cdot \Vert_p$ and we recover the result from~\cite{sun2022mirror,sun2023unified}.

\paragraph{Hyperbolic-cosine entropy potential.}  Finally, we consider $\phi^{\mathrm{MD}_1}(\beta) =  \sum_{i = 1}^d (\cosh(\beta_i) - 1)$. Applying \Cref{thm:separable},  we get that $\phi_\infty \propto \Vert \cdot \Vert_\infty$.

\paragraph{Hyperbolic entropy potential.}  The hyperbolic entropy potential: $\phi^{\mathrm{MD}_2}(\beta) = \sum_{i = 1}^d (\beta_i \arcsinh(\beta_i) - \sqrt{\beta_i^2 + 1} - 1)$ plays a central role in works considering diagonal linear networks~\citep{woodworth2020kernel,pesme2023saddle}. Applying \Cref{thm:separable}, we obtain that $\phi_\infty \propto \Vert \cdot \Vert_1$ and we recover the result from~\cite{Lyu2020Gradient} and \cite{moroshko2020implicit}.

\paragraph{Experimental details concerning \Cref{fig:2d_experiments}.} As shown in \Cref{fig:2d_experiments} (Middle), we generate $40$ points with positive labels and $40$ points with negative labels. Starting from $\beta_0 = 0$, we run mirror descent with the exponential loss $\ell(z) = \exp( - z)$ and with the three following potentials:
\begin{align*}
  (i) \ \   \phi^{\mathrm{GD}} = \Vert \cdot \Vert_2^2, \qquad (ii) \ \  \phi^{\mathrm{MD}_1} = \mathrm{cosh}\text{-entropy},  \qquad  (iii) \ \ \phi^{\mathrm{MD}_2} = \text{Hyperbolic entropy}. 
\end{align*}
We first observe in \Cref{fig:2d_experiments} (Left) that the training loss converges to zero, as predicted by \Cref{prop:loss_to_0}, with a convergence rate that varies across different potentials. Moreover, as illustrated in \Cref{fig:2d_experiments} (Middle and Right), the iterates converge in direction towards their respective unique $\phi_\infty$-max margin solutions associated with the following geometries: 
\begin{align*}
  (i) \ \   \phi^{\mathrm{GD}}_\infty \propto \Vert \cdot \Vert_2, \qquad  (ii) \ \ \phi^{\mathrm{MD}_1}_\infty = \Vert \cdot \Vert_\infty,  \qquad (iii) \ \  \phi^{\mathrm{MD}_2}_\infty \propto \Vert \cdot \Vert_1. 
\end{align*}
Therefore, by employing various potentials, we can induce different implicit biases, leading to distinct generalisation properties depending on the data distribution. The trajectories of the mirror descent iterates are shown and commented in \Cref{fig:traj}.

\begin{figure}[t]
    \centering
    \includegraphics[width=\textwidth]{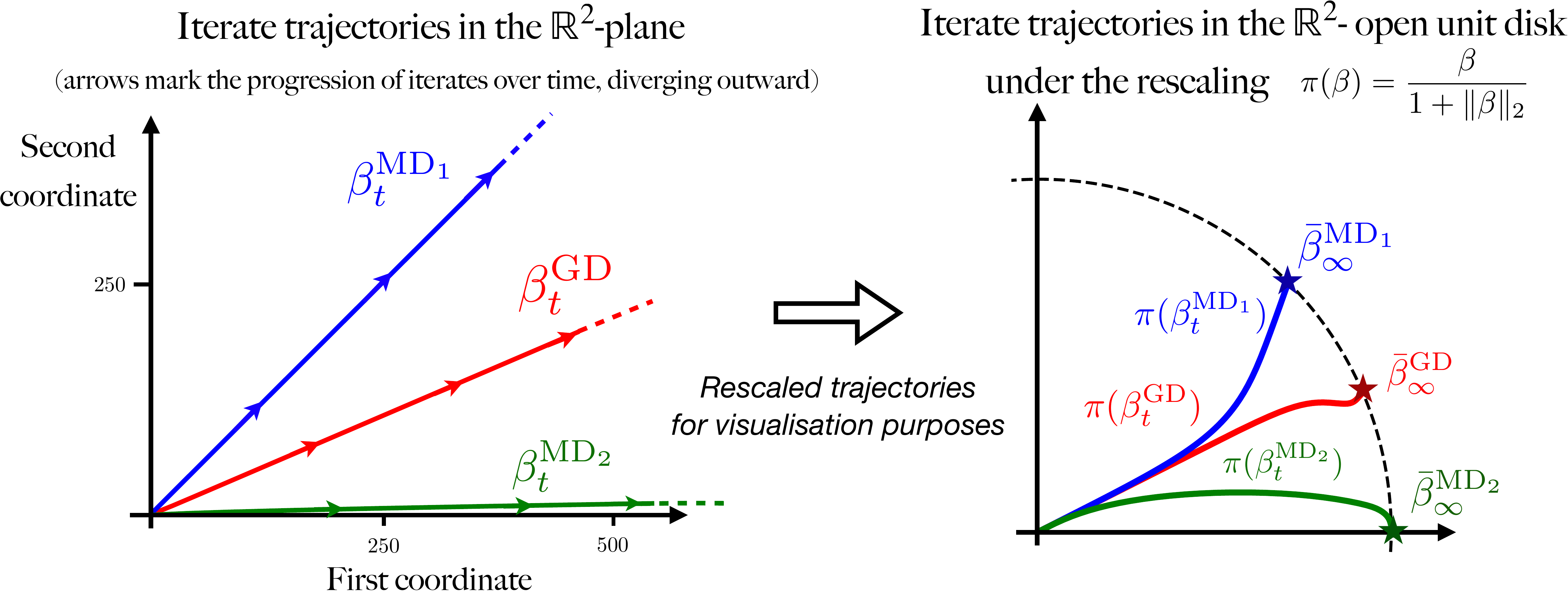}
    \caption{Mirror flow trajectories on a 2-dimensional dataset for three different potentials (exact same setting as in \Cref{fig:2d_experiments}). \textit{Left:} the iterates diverge to infinity and the directional convergence depends on the choice of potential. \textit{Right:} 
    the normalised iterates converge towards their respective $\phi_\infty$-maximum-margin predictors (illustrated by stars), as predicted by \Cref{thm:main}.}
    \label{fig:traj}
\end{figure}

\newpage

\section{Conclusion and limitations}\label{sec:conclu}

In this paper, we offer a comprehensive characterisation of the implicit bias of mirror flow for separable classification problems. This characterisation is framed in terms of the horizon function associated with the mirror descent potential, leveraging the asymptotic geometry induced by the~potential. Note that we did not cover the \textbf{discrete} mirror descent algorithm; we believe the analysis would extend without additional difficulties compared to the continuous counterpart.

\paragraph{Extensions and open problems.} Our results being purely asymptotic, characterising the rate at which the normalised iterates converge towards the maximum-margin solution is an open direction for future research. Furthermore, we note that our analysis does not cover potentials that are defined only on a strict subset of $\R^d$ (such as the log-barrier and the negative entropy), and with possibly non-coercive gradients. This class of potentials is of interest as it arises when investigating deep architectures, such as diagonal linear networks of depth $D > 2$. In this setting, it is known that gradient flow on the weights lead to a mirror flow on the predictors with a certain potential $\phi_D$~\citep{woodworth2020kernel}. Interestingly, the potentials $\phi_D$ have non-coercive gradients and their horizon functions do not depend on the depth $D$ as they are all proportional to the $\ell_1$-norm. The predictors are, however, known to converge in direction towards a KKT point of the non-convex $\ell_{2 / D}$-max-margin problem~\citep{Lyu2020Gradient} which can be different from the $\ell_1$-max-margin problem~\citep{moroshko2020implicit}. This observation highlights that our coercive gradient assumption is necessary for our result to hold. However, extending our analysis beyond this assumption is a promising direction for understanding gradient dynamics in deep architectures.

\begin{ack}
S.P. would like to thank Loucas Pillaud-Vivien for the helpful discussions at the beginning of the project as well as Pierre Quinton for the careful proofreading of the paper.  The authors also thank Jérôme Bolte and Edouard Pauwels for the precious help on the existence of the Hausdorff limit of definable families, as well as Nati Srebro and Lénaïc Chizat for the remarks on the limitations of our result for potentials with non-coercive gradients. This work was supported by the Swiss National Science Foundation (grant number 212111).
\end{ack}

\newpage 

\bibliographystyle{plainnat}
\bibliography{bio.bib}

\appendix

\newpage

\paragraph{Organisation of the Appendix.}
\begin{enumerate}
    \item In \Cref{app:MD_proofs}, we provide the proofs of the existence and uniqueness of \eqref{mirror_descent_ODE}, of the convergence of the loss, the divergence of the iterates and the proof of Lemma \ref{lemma:main:comp_slackness}.
    \item In \Cref{app:phi_infty_proofs}, we provide all the proofs concerning the construction of the horizon shape and that of our main \Cref{thm:separable,thm:main}.
\end{enumerate}

\newpage

\section{Proofs of properties of the mirror flow in the classification setting}\label{app:MD_proofs}

We start by proving the existence and uniqueness of \eqref{mirror_descent_ODE} over $\R_{\geq 0}$. The proof is standard and relies on ensuring that the iterates do not diverge in finite time.

\begin{restatable}{lemma}{lemmaExistence}\label{lemma:existence_MF}
For any initialisation $\beta_0 \in \R^d$, there exists a unique solution defined over $\R_{\geq 0}$ which satisfies \eqref{mirror_descent_ODE} for all $t \geq 0$ and with initial condition $\beta_{t = 0} = \beta_0$.
\end{restatable}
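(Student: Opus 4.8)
The plan is to prove existence and uniqueness of the mirror flow \eqref{mirror_descent_ODE} by first rewriting it as a standard ODE on the ``mirror variable'' and then invoking Picard--Lindelöf together with a non-blow-up argument. Set $g_t \coloneqq \nabla \phi(\beta_t)$. Since $\phi$ is strictly convex, twice continuously differentiable, coercive and has positive-definite Hessian everywhere (Assumption \ref{ass:potential}), the map $\nabla \phi : \R^d \to \R^d$ is a $C^1$-diffeomorphism onto $\R^d$ (the image being all of $\R^d$ follows from coercivity of $\phi$, i.e. $\nabla\phi$ is onto, and injectivity from strict convexity; $C^1$-ness of the inverse from the inverse function theorem using positive-definiteness of $\nabla^2\phi$). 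Writing $\beta_t = (\nabla\phi)^{-1}(g_t)$, the mirror flow becomes the ordinary differential equation
\begin{align*}
    \dot g_t = - \nabla L\big((\nabla\phi)^{-1}(g_t)\big) \eqqcolon F(g_t), \qquad g_0 = \nabla\phi(\beta_0),
\end{align*}
where $F : \R^d \to \R^d$ is $C^1$ (composition of the $C^1$ maps $\nabla L$, coming from $\ell$ being twice continuously differentiable, and $(\nabla\phi)^{-1}$). Hence $F$ is locally Lipschitz, and the Cauchy--Lipschitz/Picard--Lindelöf theorem gives a unique maximal solution $g$ defined on a maximal interval $[0, T_{\max})$.

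Next I would show $T_{\max} = +\infty$, i.e. rule out finite-time blow-up. The standard escape-from-compacts criterion says that if $T_{\max} < \infty$ then $\|g_t\| \to \infty$ as $t \to T_{\max}$. To contradict this, use a Lyapunov-type control. Along the flow, $\frac{\dd}{\dd t} L(\beta_t) = \langle \nabla L(\beta_t), \dot\beta_t\rangle$; and from $\dot g_t = -\nabla L(\beta_t)$ together with $\dot g_t = \nabla^2\phi(\beta_t)\dot\beta_t$, we get $\dot\beta_t = -(\nabla^2\phi(\beta_t))^{-1}\nabla L(\beta_t)$, so
\begin{align*}
    \frac{\dd}{\dd t} L(\beta_t) = - \big\langle \nabla L(\beta_t), (\nabla^2\phi(\beta_t))^{-1} \nabla L(\beta_t) \big\rangle \leq 0,
\end{align*}
since $\nabla^2\phi(\beta_t)^{-1}$ is positive-definite. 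Thus $L(\beta_t) \leq L(\beta_0)$ for all $t \in [0,T_{\max})$. Because $L$ is nonnegative (each $\ell \geq 0$, as $\ell$ is decreasing to $0$), continuous and coercive in the relevant directions — more precisely its sublevel sets, when intersected with... — here it is cleaner to control $g_t$ directly via the Bregman divergence: one checks $\frac{\dd}{\dd t} D_\phi(\beta_0, \beta_t) = \langle \nabla\phi(\beta_t) - \nabla\phi(\beta_0), \dot\beta_t\rangle = \langle g_t - g_0, -(\nabla^2\phi)^{-1}\nabla L\rangle$; alternatively, and most simply, since $L$ is convex and decreasing along the trajectory, and since $\beta \mapsto D_\phi(\beta_0,\beta)$ has bounded sublevel sets (Assumption \ref{ass:potential}, point 2), combining $\dot g_t = -\nabla L(\beta_t)$ with $L(\beta_t)\le L(\beta_0)$ allows bounding $\|g_t - g_0\|$ on any finite interval, so $g_t$ stays in a compact set on $[0,T_{\max})$, contradicting blow-up. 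Hence $T_{\max} = +\infty$, and transporting back through the diffeomorphism $(\nabla\phi)^{-1}$ yields a unique global solution $\beta_t = (\nabla\phi)^{-1}(g_t)$ to \eqref{mirror_descent_ODE} on $\R_{\geq 0}$.

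The main obstacle is the non-blow-up step: one must genuinely use the structure of the problem, since a priori $F$ is only locally Lipschitz and could blow up in finite time. The key leverage is that $L$ is decreasing along the flow (an exact computation using $\nabla^2\phi \succ 0$), which pins the trajectory to a sublevel set of $L$ — but this bounds $\beta_t$ only if $L$'s sublevel sets are bounded, which need not hold for separable data. The clean fix is to instead bound the mirror variable $g_t$: since $\dot g_t = -\nabla L(\beta_t)$ and, along the trajectory, $\|\nabla L(\beta_t)\|$ is controlled (it is continuous on the compact sublevel set $\{L \le L(\beta_0)\}$ intersected with... ), one gets $\|g_t\| \le \|g_0\| + C t$ on any finite time window, which suffices. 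Making this bound rigorous — identifying the right compact set and the right continuity/coercivity argument so that $\|\nabla L\|$ is bounded along the flow — is the one place requiring care; everything else is routine application of Picard--Lindelöf and the inverse function theorem.
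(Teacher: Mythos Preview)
Your overall structure matches the paper's: rewrite the flow in the dual variable $g_t=\nabla\phi(\beta_t)$, apply Picard--Lindel\"of for a maximal solution on $[0,T_{\max})$, then rule out finite-time blow-up. (One small correction in the setup: surjectivity of $\nabla\phi$ comes from the gradient-coercivity in Assumption~\ref{ass:potential}, point 4, not from coercivity of $\phi$ itself.)

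The genuine gap is the non-blow-up step, which you yourself flag as ``the one place requiring care'' and leave open. None of your sketches closes it: the sublevel set $\{L\le L(\beta_0)\}$ is \emph{not} compact for separable data, so continuity of $\nabla L$ there does not bound $\|\nabla L(\beta_t)\|$; your Bregman derivative $\langle\nabla\phi(\beta_t)-\nabla\phi(\beta_0),\dot\beta_t\rangle$ is actually $\tfrac{\dd}{\dd t}D_\phi(\beta_t,\beta_0)$ (arguments swapped), which is not the quantity Assumption~\ref{ass:potential}(2) controls; and the claim that ``$\dot g_t=-\nabla L(\beta_t)$ together with $L(\beta_t)\le L(\beta_0)$ bounds $\|g_t-g_0\|$'' is unjustified as written.

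The paper's fix is a one-line Bregman inequality using convexity of $L$: for any fixed $\beta$,
\[
\frac{\dd}{\dd t}D_\phi(\beta,\beta_t)=\langle\nabla L(\beta_t),\beta-\beta_t\rangle\le L(\beta)-L(\beta_t)\le L(\beta)-\inf L,
\]
so $D_\phi(\beta,\beta_t)$ grows at most linearly and is bounded on $[0,T_{\max})$ whenever $T_{\max}<\infty$. Assumption~\ref{ass:potential}(2) then traps $\beta_t$ in a compact set, hence $\dot g_t=-\nabla L(\beta_t)$ is bounded, $g_t$ has a limit at $T_{\max}$, and maximality is contradicted. This is precisely the missing link between ``$L$ convex'' and ``Bregman sublevel sets bounded'' that you listed but did not connect. (Your alternative route is also salvageable: on $\{L\le L(\beta_0)\}$ each margin $y_i\langle x_i,\beta\rangle$ is bounded below, and since $\ell$ is convex with $\ell'<0$ the map $|\ell'|$ is nonincreasing, so $\|\nabla L\|$ is bounded on that set even though the set is unbounded; this yields $\|g_t\|\le\|g_0\|+Ct$ directly.)
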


\begin{proof}
From Assumption~\ref{ass:potential}, we have that $\phi$ is differentiable, strictly convex and its gradient is coercive. Consequently, $\nabla \phi$ is bijective over $\R^d$ (see \cite{Rockafellar1970}, Theorem 26.6). Furthermore, the Fenchel conjugate $\phi^*$ is differentiable over $\R^d$ and $(\nabla \phi)^{-1} = \nabla \phi^*$.

To prove the existence and uniqueness of a global solution of \eqref{mirror_descent_ODE}, we first consider the following differential equation:
\begin{align}\label{def:MF2}
    \dd u_t = - \nabla L(\nabla \phi^*(u_t)) \dd t,
\end{align}
with initial condition $u_{t = 0} = \nabla \phi^* (\beta_0)$.

Since $L$ is $\C^2$, $\nabla L$ is Lipschitz on all compact sets. Furthermore, since $\nabla^2 \phi$ is p.s.d., $\nabla \phi^* = (\nabla \phi)^{-1}$ is $\mathcal{C}^1$ and therefore Lipschitz on all compact sets. Hence $\nabla L \circ \nabla \phi^*$ is Lipschitz on all compact sets and from the Picard-Lindelöf theorem, there exists a unique maximal (\textit{i.e.} which cannot be extended) solution $(u_t)$ satisfying \cref{def:MF2} such that $u_{t = 0} = \nabla \phi^* (\beta_0)$. We denote $[0, T_{\max})$ the intersection of this maximal interval of definition (which must be open)  and $\R_{\geq 0}$. Our goal is now to prove that $T_{\max} = + \infty$.  To do so, we assume that $T_{\max}$ is finite and we will show that this leads to a contradiction due to the fact that the iterates $\beta_t$ cannot diverge in finite time. Let $\beta_t \coloneqq \nabla \phi^*(u_t)$ and notice that $\beta_t$ is therefore the unique solution satisfying \eqref{mirror_descent_ODE} over $[0, T_{\max})$ with $\beta_{t=0} = \beta_0$.

\myparagraph{Bounding the trajectory of $\beta_t$ over $[0, T_{\max})$.} 
Pick any $\beta \in \R^d$ and notice that by convexity of~$L$:
\begin{align*}
 \frac{\dd}{\dd t} D_\phi(\beta, \beta_t) = - \langle \nabla L(\beta_t), \beta_t -\beta \rangle \leq - (L(\beta_t) - L(\beta)) \leq L(\beta) - L_{\min}.
\end{align*}
Where $L_{\min}$ is a lower bound on the loss. Integrating from $0$ to $t < T_{\max}$ we get:
\begin{align*}
    D_\phi(\beta, \beta_t) &\leq t \cdot (L(\beta) - L_{\min} )  +  D_\phi(\beta, \beta_0)  \\
    &\leq T_{\max} \cdot (L(\beta) - L_{\min} )  +  D_\phi(\beta, \beta_0) 
\end{align*} 
Therefore, due to \Cref{ass:potential}, the iterates $\beta_t$ are bounded over $[0, T_{\max})$. The proof from here is standard (see e.g. \cite{heavy_ball_friction}, Theorem 3.1): from \cref{def:MF2} we get that $\dot{u}_t$ is bounded over $[0, T_{\max})$ and $\sup_{t \in [0, T_{\max})} \Vert \dot{u}_t \Vert =: C < +\infty$ which means that $\Vert u_t - u_{t'} \Vert \leq C |t - t'|$. Hence $\lim_{t \to T_{\max}} u_t =: u_\infty$ must exist. Applying the Picard-Lindelöf again at time $T_{\max}$ with initial condition $u_\infty$ violates the initial maximal interval assumption. Therefore $T_{\max} = +\infty$ which concludes the proof.
\end{proof}

We now recall and prove classical results on the mirror flow in the classification setting.

\propLoss*

\begin{proof} 
\myparagraph{The loss is decreasing.}
$ \frac{\dd}{\dd t} L(\beta_t) =  \langle \nabla L(\beta_t), \dot{\beta}_t     \rangle 
  =  - \langle \nabla^2 \phi(\beta_t)^{-1} \nabla L(\beta_t) , \nabla L(\beta_t)\rangle
  \leq 0 $, where the inequality is due to the convexity of the potential $\phi$.

\myparagraph{Convergence of the loss towards $0$.}
Now consider the Bregman divergence between an arbitrary point $\beta$ and $\beta_t$:
\begin{align*}
  D_{\phi}(\beta, \beta_t ) = \phi(\beta) - \phi(\beta_t) - \langle \nabla \phi(\beta_t), \beta - \beta_t \rangle \geq 0.
\end{align*}
which is such that:
\begin{align}
  \frac{\diff}{\diff t }  D_{\phi}(\beta, \beta_t ) &=  \langle \frac{\diff}{\diff t } \nabla \phi(\beta_t), \beta_t - \beta \rangle \nonumber \\ 
        &= - \langle  \nabla L(\beta_t), \beta_t - \beta \rangle  \nonumber \\
        &\leq - (L(\beta_t) - L(\beta))
\end{align}
where the inequality is by convexity of the loss. Integrating and due to the decrease of the loss, we get that:
\begin{align*}
L(\beta_t) 
  &\leq  \frac{1}{t} \int_0^t L(\beta_s) \diff s  \\ 
  &\leq  L(\beta) + \frac{ D_\phi(\beta, \beta_0) - D_\phi(\beta, \beta_t)   }{t} \\
  &\leq  L(\beta) + \frac{ D_\phi(\beta, \beta_0) }{t} 
\end{align*}
Since this is true for all point $\beta$, we get that $L(\beta_t) \leq  \inf_{\beta \in \R^d} L(\beta) + \frac{ D_\phi(\beta, \beta_0)    }{t}$. It remains to show that the right hand term goes to $0$ as $t$ goes to infinity. To show this, let $\varepsilon > 0$, by the separability assumption we get that there exists $\beta^\star$ such that $\min y_i \langle x_i, \beta^\star \rangle >0$. Since $L(\lambda \beta_\star) \underset{\lambda \to \infty}{\longrightarrow} 0$, we can choose $\lambda$ big enough such that $L(\lambda \beta^\star) < \varepsilon$ and then $t_\lambda$ large enough such that $\frac{1}{t_\lambda} D_\phi(\lambda \beta^\star, \beta_0) < \varepsilon$. The loss therefore converges to $0$.

\myparagraph{Divergence of the iterates.}  

For all $i \in [n]$, $\ell(y_i \la  x_i, \beta_t \ra) \leq L(\beta_t) \tti 0$. Due to the assumptions on the loss, this translates into $y_i \la  x_i, \beta_t \ra \tti \infty$, hence $\norm{\beta_t} \tti +\infty$.

\end{proof}

\paragraph{Introducing a few notations.} Before giving two important lemmas, we provide a few notations.  Recall that $Z$ is the data matrix of size $n \times d$ whose $i^{th}$ row is $y_i x_i$, we then have that $\nabla L(\beta) = Z^\top \ell'(Z \beta)$, where $\ell'$ is applied component wise. We now denote by $q(\beta)$ the vector in $\R^n$ equal to:
\begin{align*}
    q(\beta) = \frac{\ell'(Z \beta)}{\ell'(\ell^{-1}(\sum_i \ell(y_i \langle x_i, \beta \rangle)))}
\end{align*}
Notice that due to $\ell > 0$, $\ell' < 0$, $\ell^{-1}$ is increasing and $\ell'$ is decreasing, we have that $q(\beta) > 0 $ and that for all $i_0 \in [n]$, 
\[q(\beta)_{i_0} = \frac{\ell'(y_{i_0} \langle x_{i_0}, \beta \rangle)}{\ell'(\ell^{-1}(\sum_i \ell(y_i \langle x_i, \beta \rangle)))} \leq \frac{\ell'(y_{i_0} \langle x_{i_0}, \beta \rangle)}{\ell'(\ell^{-1}(\ell(y_{i_0} \langle x_{i_0}, \beta \rangle)))} \leq 1.\]
Therefore $q(\beta) \in (0, 1]^n$.

We further denote \[ a_t \coloneqq - \ell'(\ell^{-1}(\sum_i \ell(y_i \langle x_i, \beta_t \rangle))) > 0.\] This way we can simply write $\nabla L(\beta_t) = - a_t Z^\top q_t$ with $q_t \coloneqq q(\beta_t)$.

Integrating the flow \eqref{mirror_descent_ODE}, we write:
\begin{align}\label{app:eq:ODE_int_as}
    \nabla \phi (\beta_\tit) &= \nabla \phi (\beta_0) - \int_0^t \nabla L(\beta_s) \dd s  \nonumber \\
    &= \nabla \phi (\beta_0) + Z^\top \int_0^t a_s q_s \dd s.
\end{align}

\paragraph{Two lemmas.} In the following lemma we recall and prove that a coordinate $q_\infty[k]$ must be equal to $0$ if datapoint $x_k$ is not a support vector of $\bbeta_\infty$.

\begin{lemma}
\label{lemma:comp_slackness}
For some function $C_t \to \infty$, if the iterates $\bbeta_t = \frac{\beta_t}{C_t}$ converge towards a vector which we denote $\bbeta_\infty$ and $q_t$ converges towards a vector $q_\infty \in [0, 1]^n$. Then it holds that:
\begin{align*}
    q_\infty[k]= 0 \qquad \text{if} \qquad 
y_k \la x_k, \bbeta_\infty \ra > \min_{1 \leq i \leq n} y_i \la x_i, \bbeta_\infty \ra .
\end{align*}
In words, $q_\infty[k]= 0$ if $x_k$ is not a support vector.
\end{lemma}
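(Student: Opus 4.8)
The plan is to exploit the explicit form of $q_t$ coming from the exponential tail of the loss, together with the fact that $\beta_t$ diverges to infinity.

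First I would recall that under Assumption~\ref{ass:loss}, the exponential tail gives $\ell(z) \sim -\ell'(z) \sim \exp(-z)$ as $z \to \infty$, and since the loss converges to zero (Proposition~\ref{prop:loss_to_0}) each margin $y_i \langle x_i, \beta_t \rangle \to +\infty$, so we are in the asymptotic regime for every coordinate. Writing $m_i(t) \coloneqq y_i \langle x_i, \beta_t \rangle$, the ratio defining $q_t[k]$ is
\begin{align*}
q_t[k] = \frac{\ell'(m_k(t))}{\ell'\big(\ell^{-1}(\sum_i \ell(m_i(t)))\big)}.
\end{align*}
I would show that both numerator and denominator behave like exponentials: $\ell'(m_k(t)) \sim -\exp(-m_k(t))$, and since $\sum_i \ell(m_i(t)) \sim \sum_i \exp(-m_i(t))$ while $\ell^{-1}(u) \sim -\ln u$ as $u \to 0^+$, we get $\ell^{-1}(\sum_i \ell(m_i(t))) \sim -\ln\big(\sum_i \exp(-m_i(t))\big)$, hence the denominator is $\sim -\exp\big(\ln \sum_i \exp(-m_i(t))\big) = -\sum_i \exp(-m_i(t))$. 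Therefore
\begin{align*}
q_t[k] \sim \frac{\exp(-m_k(t))}{\sum_{i=1}^n \exp(-m_i(t))} = \sigma(-Z\beta_t)_k,
\end{align*}
so it suffices to analyse the softmax. (Strictly, I would carry the $\sim$ signs carefully, or bound $q_t[k]$ above and below by $(1\pm o(1))$ times this softmax ratio; the $o(1)$ is uniform in $k$ since there are finitely many coordinates.)

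Next I would bound the softmax ratio. Let $m_{\min}(t) \coloneqq \min_i m_i(t) = \min_i y_i \langle x_i, \beta_t\rangle$. Then
\begin{align*}
\sigma(-Z\beta_t)_k \leq \frac{\exp(-m_k(t))}{\exp(-m_{\min}(t))} = \exp\big(-(m_k(t) - m_{\min}(t))\big) = \exp\big(-C_t(y_k\langle x_k,\bar\beta_t\rangle - \min_i y_i\langle x_i,\bar\beta_t\rangle)\big),
\end{align*}
using $m_i(t) = C_t\, y_i\langle x_i, \bar\beta_t\rangle$. Now suppose $y_k\langle x_k,\bar\beta_\infty\rangle > \min_i y_i\langle x_i,\bar\beta_\infty\rangle$; call the positive gap $2\delta$. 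By continuity of the finitely many linear functionals and of the min, for $t$ large enough $y_k\langle x_k,\bar\beta_t\rangle - \min_i y_i\langle x_i,\bar\beta_t\rangle \geq \delta > 0$, so the displayed bound gives $\sigma(-Z\beta_t)_k \leq \exp(-\delta C_t) \to 0$ since $C_t \to \infty$. Combined with the equivalence $q_t[k] \sim \sigma(-Z\beta_t)_k$ (which only costs a bounded multiplicative factor), we conclude $q_t[k] \to 0$, and since $q_t[k] \to q_\infty[k]$ by hypothesis, $q_\infty[k] = 0$.

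The argument is essentially routine; the only place needing care is the asymptotic analysis of the denominator $\ell'(\ell^{-1}(\sum_i \ell(m_i(t))))$ — one must be careful that $\ell^{-1}$ and $\ell'$ are being composed in the regime where the exponential equivalences are valid, and that the $o(1)$ errors from replacing $\ell,\ell'$ by $\exp(-\cdot)$ do not interfere with the exponentially small quantities, which is fine because we only need the ratio $q_t[k]/\sigma(-Z\beta_t)_k$ to stay bounded, not to be close to $1$ in any fine sense. This is the main (mild) obstacle; everything else follows from $C_t \to \infty$ and continuity.
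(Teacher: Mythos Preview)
Your argument is correct and shares the paper's core idea: exploit the margin gap together with the exponential tail of $\ell'$ to show that $q_t[k]$ is bounded by something of order $\exp(-\delta C_t)$. The execution differs slightly. You first establish the asymptotic equivalence $q_t[k] \sim \sigma(-Z\beta_t)_k$ by analysing $\ell^{-1}$ and composing equivalences, and then bound the softmax. The paper instead skips the softmax reduction entirely: using only that $\sum_i \ell(m_i(t)) \geq \ell(m_1(t))$ together with the monotonicity of $\ell^{-1}$ and $\ell'$, it directly bounds
\[
q_t[k] \;\leq\; \frac{\ell'(m_k(t))}{\ell'(m_1(t))},
\]
where $m_1(t)$ is (w.l.o.g.) the smallest margin, and then invokes the exponential tail of $-\ell'$ only once to send this ratio to zero. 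This avoids the delicate step you flagged, namely the composition $\ell' \circ \ell^{-1}$ where one must track additive $o(1)$ errors rather than multiplicative $\sim$; the paper never needs anything about $\ell^{-1}$ beyond monotonicity. Your route works (the cleanest way to justify it is to use $\ell'(z)\sim -\ell(z)$ directly, giving $\ell'(\ell^{-1}(v)) = -v(1+o(1))$ without passing through $-\ln v$), but the paper's monotonicity shortcut is shorter and sidesteps the asymptotic bookkeeping.
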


\begin{proof}
Recall that
\begin{align*}
    q(\beta_t) = \frac{\ell'(Z \beta_t)}{\ell'(\ell^{-1}(\sum_i \ell(y_i \langle x_i, \beta_t \rangle)))}.
\end{align*}
From \Cref{prop:loss_to_0}, we have that $\min_{i \in [n]} y_i \langle x_i, \bbeta_\infty \rangle > 0$ and we denote this margin as $\gamma$. Now consider $k \in [n]$ which is not a support vector, i.e, $y_k \la x_k, \bbeta_\infty \ra > \min_{i \in [n]} y_i \la x_i, \bbeta_\infty \ra $ and without loss of generality assume that $y_1 \la  x_1, \bbeta_\infty \ra = \min_{1 \leq i \leq n} y_i \la  x_i, \bbeta_\infty \ra $. 
We denote by $\delta = \la y_k x_k - y_1 x_1, \bbeta_\infty \ra > 0 $ the  gap. Then 
\begin{align*}
q(\beta_t)_k &= \frac{\ell'(C_t \la x_k, \bbeta_t \ra)}{\ell'(\ell^{-1}(\sum_i \ell(C_t y_i \langle x_i, \bbeta_t \rangle)))} \\
&\leq \frac{\ell'(C_t y_k \la x_k, \bbeta_t \ra)}{\ell'(C_t y_1\la x_1, \bbeta_t \ra)}
\end{align*} 

We write $\bbeta_t = \bbeta_\infty + r_t$ where $(r_t)_{t \geq 0} \in \R^d$ converges to $0$. For $t$ big enough, we have that $ y_k \la x_k, \bbeta_t \ra \geq y_k \la x_k, \bbeta_\infty \ra - \frac{\delta}{4}$ and $ y_1 \la x_1, \bbeta_t \ra \leq y_1 \la x_1, \bbeta_\infty \ra + \frac{\delta}{4}$. Therefore for $t$ large enough, since $\ell'$ is negative and increasing:
\begin{align*}
q(\beta_t)[k] &\leq \frac{\ell'(C_t ( y_k \la x_k, \bbeta_\infty \ra - \delta / 4 ))}{\ell'(C_t ( y_1\la x_1, \bbeta_\infty \ra + \delta / 4 ))} \\
&\leq \frac{\ell'(C_t ( \gamma + \delta / 2 + \delta / 4 ))}{\ell'(C_t ( \gamma + \delta / 2 ))} \underset{t \to \infty}{\longrightarrow} 0,
\end{align*} 
where the last term converge to $0$ due to the exponential tail of $- \ell'$ and that $C_t \to \infty$.
\end{proof}

We here reformulate and prove Lemma \ref{lemma:main:comp_slackness}. 

\begin{lemma}[Reformulation of Lemma \ref{lemma:main:comp_slackness}]\label{app:lemma:diverging_loss}
    Denoting $a(\beta_t)  \coloneqq - \ell'(\ell^{-1}(\sum_i \ell(y_i \langle x_i, \beta_t \rangle))) > 0$, we have that $\int_0^t a(\beta_s) \dd s \underset{t \to \infty}{\longrightarrow} + \infty$. For $\ell(z) = \exp(-z)$, this translates to $\int_0^t L(\beta_s) \dd s \to \infty$.
\end{lemma}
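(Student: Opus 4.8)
The plan is to show that $\int_0^t a(\beta_s)\,\dd s \to +\infty$ by contradiction: suppose the integral converges to a finite limit $A < \infty$. Recall from \Cref{app:eq:ODE_int_as} that $\nabla\phi(\beta_t) = \nabla\phi(\beta_0) + Z^\top \int_0^t a_s q_s\,\dd s$. Since $q_s \in (0,1]^n$ is bounded and $a_s > 0$, the assumption $\int_0^\infty a_s\,\dd s = A < \infty$ would force $\int_0^t a_s q_s\,\dd s$ to converge (each component is monotone nondecreasing and bounded by $A$), hence $\nabla\phi(\beta_t)$ converges to some finite limit $g_\infty \in \R^d$. By \Cref{ass:potential} (point 4, coercivity of the gradient), $\norm{\nabla\phi(\beta_t)}$ staying bounded implies $\norm{\beta_t}$ stays bounded. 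This directly contradicts \Cref{prop:loss_to_0}, which guarantees $\norm{\beta_t} \to +\infty$. Therefore $\int_0^t a_s\,\dd s \to +\infty$.

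For the second statement, when $\ell(z) = \exp(-z)$ we compute $a(\beta_t)$ explicitly: $\ell^{-1}(u) = -\ln u$, so $\ell^{-1}\big(\sum_i \ell(y_i\la x_i,\beta_t\ra)\big) = -\ln L(\beta_t)$, and $\ell'(z) = -\exp(-z)$, hence $a(\beta_t) = -\ell'(-\ln L(\beta_t)) = \exp(\ln L(\beta_t)) = L(\beta_t)$. Thus $\int_0^t a(\beta_s)\,\dd s = \int_0^t L(\beta_s)\,\dd s$, and the first part gives $\int_0^t L(\beta_s)\,\dd s \to +\infty$. This is exactly what is needed in \Cref{section:intuitive_constr} to justify that $\theta: t \mapsto \int_0^t L(\beta_s)\,\dd s$ is a bijection on $\R_{\geq 0}$ (it is continuous, strictly increasing since $L > 0$, vanishes at $0$, and now is shown to be unbounded).

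The main obstacle is making rigorous the claim that $\int_0^\infty a_s\,\dd s < \infty$ forces $\nabla\phi(\beta_t)$ to converge — this requires observing that each coordinate of $t \mapsto \int_0^t a_s q_s\,\dd s$ need not itself be monotone (since $q_s$ can have both signs in principle, though here $q_s > 0$), so one instead bounds $\big\|\int_{t}^{t'} a_s q_s\,\dd s\big\| \leq \int_t^{t'} a_s \norm{q_s}\,\dd s \leq \sqrt n \int_t^{t'} a_s\,\dd s \to 0$ as $t,t' \to \infty$, giving a Cauchy criterion. Everything else is a direct chaining of results already established in the excerpt (\Cref{prop:loss_to_0} and \Cref{ass:potential}), so once the Cauchy argument is in place the proof is short.
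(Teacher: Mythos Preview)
Your proof is correct and uses essentially the same ingredients as the paper: the integrated mirror flow identity \eqref{app:eq:ODE_int_as}, the bound $q_s \in (0,1]^n$, the coercivity of $\nabla\phi$ from \Cref{ass:potential}, and the divergence of the iterates from \Cref{prop:loss_to_0}. The only difference is presentational: the paper argues directly by bounding $\|\nabla\phi(\beta_t)\| \leq \|\nabla\phi(\beta_0)\| + \big(\sum_i \|x_i\|\big)\int_0^t a_s\,\dd s$ and concluding that the right-hand side must diverge, whereas you run the contrapositive as a contradiction; this makes the Cauchy-criterion detour unnecessary in the paper's version, but both are valid.
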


\begin{proof}
Recall that $\nabla \phi (\beta_\tit) =  \nabla \phi (\beta_0) + Z^\top \int_0^t a(\beta_s) q(\beta_s) \dd s$, therefore
\begin{align*}
 \norm{\nabla \phi(\beta_t)} &\leq  \norm{ \nabla \phi(\beta_0)} +  \sum_{i=1}^n \norm{x_i} \int_0^t a(\beta_s)  q(\beta_s)[i] \dd s \\
 &\leq \norm{ \nabla \phi(\beta_0)} + \big ( \sum_{i=1}^n \norm{x_i} \big) \int_0^t a(\beta_s) \dd s.
 \end{align*}
 Where the first inequality is due to the triangle inequality and the second to the fact $q(\beta) \in (0, 1]^n$.
 Since the iterates diverge, we have from \Cref{ass:potential} that $\norm{ \nabla \phi(\beta_t)} \tti \infty$ and therefore that $\int_0^t a(\beta_s) \dd s \tti +\infty$.
\end{proof}

\newpage

\section{Differed proofs on the construction of $\phi_\infty$}\label{app:phi_infty_proofs}

As mentioned in the main text, the following property highlights the fact that all `reasonable' potentials have a horizon shape.

\existencehorizonshape*

\begin{proof}
    Note that points (i) and (ii) are particular cases of (iii) \citep{Dries1998,Bolte2007}. If $h$ is definable in a o-minimal structure, then so is the sublevel set $S_c$ for $c > 0$, and so is the normalization factor $R_c$ since it can be defined in first-order logic as
    \[
        R_c = \{ r \in \R \,:\, \exists \beta^* \in S_c, \|\beta^*\| = r \,\,\mbox{and}\,\, \forall \beta \in S_c, \|\beta\| \leq  r \}.
    \]
    Therefore, $(\bar S_c)_{c > 0}$ if a definable family of definable and compact sets. Then so is the family $(\bar S_{t^{-1}})_{t \in (0,1]}$.
    Since all the sets belong to the unit ball of $\R^d$, they lie in the sets of compact subsets of $B(0,1)$. This set is compact for the Hausdorff metric \citep[Thm 3.85]{Aliprantis}; therefore, there exists a sequence $(t_k)_{k \in \mathbb{N}}$ such that $t_k \rightarrow 0$ and $(\bar S_{t_k^{-1}})_{k \in \mathbb{N}}$ converges to some set $\bar S$.

    We can then apply Corollary 2 of \cite{KocelCynk2014}, which states that there exists a definable arc $\gamma:(0,1] \rightarrow (0,1]$ such that $\lim_{\tau \rightarrow 0} \gamma(\tau) = 0$ and $\bar S = \lim_{\tau \rightarrow 0} \bar S_{\gamma(\tau)^{-1}}$. This implies that the limit $\bar S$ is uniquely defined and therefore that $\lim_{t \rightarrow 0} \bar S_{t^{-1}} = \bar S$.
\end{proof}

The next corollary is a more general restatement of Corollary \ref{thm:informal}. It shows that the construction of $\phi_\infty$ enables to take the limit $\lim_t \frac{\nabla \phi(\beta_t)}{t} \propto \partial \phi_\infty(\bbeta_\infty)$.

\begin{restatable}{corollary}{propEpiconvergence}\label{cor:epiconvergence}
Assume that $\phi$ admits a non-degenerate horizon shape $S_\infty$. Then its horizon function $\phi_\infty$ satisfies the following properties.
    \begin{enumerate}
        \item $\phi_\infty$ is convex and finite-valued on $\R^d$,
        \item Let $(\beta_s)_{s > 0}$ be a continous sequence such that when $s\rightarrow \infty$:
        \begin{align*}
           (a) \ \  \|\beta_s\| \rightarrow \infty, \ \ \
            (b) \ \   \frac{\beta_s}{\|\beta_s\|} \rightarrow \bar \beta \ \ \text{for some} \ \ \bar \beta \in \R^d, \ \ \
            (c) \ \ \frac{\nabla \phi (\beta_s)}{\|\nabla \phi (\beta_s)\|} \rightarrow \bar g \ \ \text{for some} \ \ \bar g \in \R^d. 
        \end{align*}
        Then $\bar g$ is proportional to a subgradient of $\phi_\infty$ at $\bar \beta$:
        \[ \bar g \in \lambda \partial \phi_\infty(\bar \beta) \quad\text{for some } \lambda > 0. \]
    \end{enumerate}
\end{restatable}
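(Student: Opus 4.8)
\textbf{Proof plan for Corollary \ref{cor:epiconvergence}.}

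\emph{Part 1 (convexity and finiteness of $\phi_\infty$).} The plan is to read these off the construction of $\phi_\infty$ as a Minkowski gauge. Since $S_\infty$ is convex (being a Hausdorff limit of the convex sets $\bar S_c$; convexity is preserved under Hausdorff limits), the gauge $\phi_\infty(\bbeta) = \inf\{r > 0 : \bbeta/r \in S_\infty\}$ is convex and positively homogeneous. Finiteness everywhere is exactly the non-degeneracy assumption: since $0 \in \mathrm{int}(S_\infty)$, there is a ball $B(0,\varepsilon) \subseteq S_\infty$, so for any $\bbeta$ one has $\bbeta/r \in S_\infty$ once $r \geq \|\bbeta\|/\varepsilon$, giving $\phi_\infty(\bbeta) \leq \|\bbeta\|/\varepsilon < \infty$. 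This part is routine and I would invoke \citet[Section 11.C and 11.E]{Rockafellar1998} for the standard gauge facts.

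\emph{Part 2 (the gradient limit is a scaled subgradient).} Here is the substance. Fix $s$ and set $c_s \coloneqq \phi(\beta_s)$; note $c_s \to \infty$ since $\phi$ is coercive and $\|\beta_s\| \to \infty$. By construction $\beta_s$ lies on the boundary of the sublevel set $S_{c_s}$, and since $\phi$ is $\mathcal C^1$ with $\nabla^2\phi \succ 0$, the vector $\nabla\phi(\beta_s)$ is an outer normal to $S_{c_s}$ at $\beta_s$, equivalently (after scaling by $R_{c_s}$) an outer normal to the normalised set $\bar S_{c_s}$ at the boundary point $\beta_s/R_{c_s}$. In gauge terms this says
\begin{align*}
  \frac{\nabla\phi(\beta_s)}{\|\nabla\phi(\beta_s)\|} \in N_{\bar S_{c_s}}\!\Big(\frac{\beta_s}{R_{c_s}}\Big)\cap \mathbb S^{d-1},
\end{align*}
where $N_K(x)$ denotes the normal cone. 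The idea is then to pass to the limit $s \to \infty$ in this membership, using $\bar S_{c_s} \to S_\infty$ in Hausdorff distance (hence in the sense of Painlevé–Kuratowski set convergence, which is what controls normal cones) and outer semicontinuity of the normal-cone map for convex sets. The point $\beta_s/R_{c_s}$ converges: indeed $\beta_s/\|\beta_s\| \to \bbeta$ by hypothesis (b), and one must check $\|\beta_s\|/R_{c_s}$ converges to a positive limit $\mu$, so that $\beta_s/R_{c_s} \to \mu\bbeta =: x_\infty \in \partial S_\infty$; non-degeneracy of $S_\infty$ guarantees $\mu > 0$ and that $x_\infty$ is a genuine boundary point. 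Outer semicontinuity then yields $\bar g \in N_{S_\infty}(x_\infty)$. Finally, translate the normal cone back to a subdifferential: for a gauge $\phi_\infty$ with unit ball $S_\infty$, one has $N_{S_\infty}(x) = \mathbb R_{\geq 0}\cdot \partial\phi_\infty(x)$ at boundary points $x$, and by positive homogeneity $\partial\phi_\infty(x_\infty) = \partial\phi_\infty(\mu\bbeta) = \partial\phi_\infty(\bbeta)$. Combining, $\bar g \in \lambda\,\partial\phi_\infty(\bbeta)$ for some $\lambda > 0$ (the constant $\lambda$ is positive because $\bar g$ is a unit vector and $\partial\phi_\infty(\bbeta)$ is a compact set not containing $0$, again by non-degeneracy).

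\emph{Main obstacle.} The delicate step is the limit-passage in the normal-cone inclusion: one needs that Hausdorff convergence $\bar S_{c_s}\to S_\infty$ of convex bodies, together with $x_s := \beta_s/R_{c_s}\to x_\infty$ with $x_\infty$ on the boundary, implies $\limsup_s N_{\bar S_{c_s}}(x_s) \subseteq N_{S_\infty}(x_\infty)$. This is a standard fact from variational analysis (normal-cone maps of convex sets are outer semicontinuous, and Hausdorff convergence of convex sets implies Kuratowski convergence of their boundaries and normal structures — see \citet{Rockafellar1998}, Chapters 4 and 6), but making it fully rigorous requires care that $x_\infty$ does not slip to a ``flat'' portion of $\partial S_\infty$ — which is precisely where the \emph{non-degeneracy} assumption and the definable-family regularity are used. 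I would also need to confirm $\|\beta_s\|/R_{c_s}$ has a positive limit; if it only has positive limit points one passes to a subsequence, notes the limiting inclusion holds for any such, and since $\partial\phi_\infty(\bbeta)$ is independent of the scaling, the conclusion is unaffected.
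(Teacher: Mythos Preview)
Your plan is correct and matches the paper's proof essentially step for step: gradient is a normal to the sublevel set, rescale to the normalised sets $\bar S_{c_s}$, pass to the limit using set convergence $\bar S_{c_s}\to S_\infty$, show the limit base point is a nonzero boundary point via non-degeneracy, and translate the normal-cone membership into a scaled subgradient of the gauge. The only cosmetic difference is that the paper packages the limit-passage by invoking Attouch's theorem (epi-convergence of the indicators $I_{\bar S_c}\to I_{S_\infty}$ implies graphical convergence of their subdifferentials, i.e.\ normal cones), which is exactly the outer-semicontinuity fact you cite; and the paper handles the scaling $\|\beta_s\|/R_{c_s}$ by extracting a subsequence rather than asserting convergence, just as you anticipate in your final paragraph.
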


\begin{proof}
The sequence of sets $(\bar S_c)$ is contained in the compact ball $B(0,1)$; therefore, Hausdorff convergence is equivalent to Painlevé-Kuratowski convergence \cite[Section 4.C]{Rockafellar1998}. Hence, as $(\bar S_c)$ are convex, so is their limit $S_\infty$ \cite[Prop 4.15]{Rockafellar1998}. It follows that $\phi_\infty$ is convex \cite[Ex 3.50]{Rockafellar1998}.

Since $S_\infty$ is non-degenerate, there exists a radius $r_0$ such that $B(0,r_0) \subset S_\infty$, which implies that $\phi_\infty(\beta)$ is finite-valued for every $\beta$.

    To prove point (ii), consider the sequence of functions $(\eta_c)_{c > 0}$ formed by the indicators of convex sets $\bar S_c$:
    \[
        \eta_c(\beta) = I_{\bar S_c}(\beta) =
        \begin{cases}
            0 &\text{if  } \beta \in \bar S_c,\\
            +\infty &\text{ otherwise.}
        \end{cases}
    \]
Note that the epigraph of $\eta_c$ is $\bar S_c \times \R_+$; these sets also converge to $S_\infty \times \R_+$ \cite[Ex 4.29]{Rockafellar1998}, from which we conclude that function $\eta_c$ converge \textit{epigraphically} to the indicator function $\eta_\infty$ of $S_\infty$ ($\eta_\infty = I_{S_\infty}$). We can then apply Attouch's theorem \citep{Attouch1993,Combari1998} ensuring that the graph of the subdifferentials of $\eta_c$
\[
   \mathcal{G}(\partial \eta_c) = \{ (\beta,g) \,:\, g \in \partial \eta_c(\beta) \}
\]
converge in Painlevé-Kuratowski sense to the graph $\mathcal{G}(\partial \eta_\infty)$ of subdifferential of $\eta_\infty$. This means that if a sequence $(\beta_c,g_c)_{c>0}$ such that $(\beta_c,g_c) \in \mathcal{G}(\partial \eta_c)$ for every $c > 0$ converges, then its limit belongs to $\mathcal{G}(\partial \eta_\infty)$.

Consider now a sequence $(\beta_s)_{s >0 }$ satisfying the conditions described in (ii). Since it diverges to infinity and $\phi$ is coercive, we have $\phi(\beta_s) \rightarrow \infty$, and we may assume w.l.o.g that $\phi(\beta_s) > 0$ for all $s$. 

We have by definition of sublevel sets that $\beta_s \in S_{\phi(\beta_s)}$, and therefore the gradient $\nabla \phi(\beta_s)$ belongs to the normal cone of $S_{\phi(\beta_s)}$ at $\beta_s$ (indeed, the gradient is orthogonal to the level sets; see e.g., \cite[Chapter 1.5]{Courant1989}). Since the normal cone of a convex set is the subdifferential of its indicator \cite[Section 23]{Rockafellar1970}, we thus have
\begin{equation}\label{eq:hbss}
    \nabla \phi(\beta_s) \in \partial I_{S_{\phi(\beta_s)}}(\beta_s),
\end{equation}

Consider now the normalized levels sets as defined in \eqref{def:sbar}. Denoting 
\[
    \bar \beta_s = \frac{\beta_s}{R_{\phi(\beta_s)}},
\]
we have $\bar \beta_s \in \bar S_{\phi(\beta_s)}$
and thus by simple rescaling \eqref{eq:hbss} becomes
\[
    \nabla \phi(\beta_s) \in \partial I_{\bar S_{\phi(\beta_s)}} \left(\bar \beta_s\right).
\]
Since $\partial I_{\bar S_c}$ is a cone (the normal cone to $\bar S_c$), this also holds for any positive multiple of $\nabla \phi(\beta_s)$. We deduce that for every $s >0$
\[
    \left( \bar \beta_s, \frac{\nabla \phi(\beta_s)}{\|\nabla \phi(\beta_s)\|} \right) \in \mathcal{G}(\partial \eta_{\phi(\beta_s)}).
\]
Note that since $\bar \beta_s$ belongs to the normalized level sets, this sequence is bounded. We can extract a subsequence $(\bar \beta_{s_k}, \frac{\nabla \phi(\beta_{s_k})}{\|\nabla \phi(\beta_{s_k})\|})_{k \geq 0}$ which converges to a limit point $(\hat \beta, \hat g)$. By the previous remark on graphical convergence of subdifferentials, we have $(\hat \beta, \hat g) \in \mathcal{G}(\partial I_{S_\infty})$, i.e.,
\begin{equation}\label{eq:hatgis}
    \hat g \in \partial I_{S_\infty}(\hat \beta).
\end{equation}
We need to prove that $\hat \beta$ is not $0$. Since $\phi$ is strictly convex, the level set $\{\phi(\beta) = c\}$ is exactly the boundary of the sublevel set $\{\phi(\beta) \leq c\}$. Therefore, $\beta_s$ lies on the boundary of $S_{\phi(\beta_s)}$, and hence so does $\bar \beta_s$ lie on the boundary of $\bar S_{\phi(\beta_s)}$. Since $0$ is in the interior of $S_\infty$, it also belongs to the interior of $\bar S_{\phi(\beta_s)}$ for $s$ larger than some $s_0$. Then, there exists $r_0 > 0$ such that $B(0,r_0) \subset \bar S_{\phi(\beta_s)}$ for $s \geq s_0$. By definition of boundary, we then have for $s \geq s_0$ $\|\bar \beta_s\| > r_0$, which leads to $\|\hat \beta \| > 0$.

To achieve the desired result; we need to relate $(\hat \beta, \hat g)$ to $(\bar \beta, \bar g)$. First, notice that by construction we have necessarily $\hat g = \bar g = \lim_{s \rightarrow \infty} \nabla \phi(\beta_s) / \|\nabla \phi(\beta_s)\| $. Then, note that
\begin{align*}
    \bar \beta = \lim_{k \rightarrow \infty} \frac{\beta_{s_k}}{\|\beta_{s_k}\|}, \quad
    \hat \beta = \lim_{k \rightarrow \infty} \frac{\beta_{s_k}}{R_{\phi(\beta_{s_k})}}.
\end{align*}
Taking the norm of the second limit, we have $\| \hat \beta \| = \lim_{k\rightarrow \infty} \frac{\|\beta_{s_k}\|}{R_{\phi(\beta_{s_k})}}$. Injecting back in the first limit yields
\[
    \bar \beta = \lim_{k \rightarrow \infty} \frac{\beta_{s_k}}{R_{\phi(\beta_{s_k})}}
    \cdot
    \frac{R_{\phi(\beta_{s_k})}}{\|\beta_{s_k}\|} = \frac{\hat{\beta}}{\|\hat \beta\|}.
\]
Therefore, \eqref{eq:hatgis} becomes
\[
    \bar g \in \partial I_{S_\infty}(\|\hat \beta\| \bar \beta).
\]
This means that $\bar g$ belongs to the \textit{normal cone} of $S_\infty$ at $\|\hat \beta\| \bar \beta$ \cite[Sec. 23]{Rockafellar1970}. We note the level set $\{\beta\,:\, \phi_\infty(\beta) \leq \phi _\infty\left(\|\hat{\beta}\|\bar \beta\right)\}$ is exactly $\tau S_\infty$ for some $\tau > 0$.
We use Corollary 23.7.1 from \cite{Rockafellar1970} which states that if a vector is in the normal cone of the level set of $\phi_\infty$, then it must be a positive multiple of a subgradient.
This implies that that there exists $\lambda \geq 0$ such that
\[
    \bar g \in \lambda \partial \phi_\infty( \|\hat{\beta}\| \bar \beta ).
\]
Finally, $\lambda > 0$ since $\|\bar g\| = 1$, and $\partial \phi_\infty( \|\hat{\beta}\| \bar \beta ) = \partial \phi_\infty(\bar \beta )$ by positive homogenity of $\phi_\infty$.
\end{proof}

We can now prove our main result, which we restate here.

\maintheorem*

The proof essentially follows exactly the same lines as in Section \ref{section:intuitive_constr} but taking into account the fact that the loss is not exactly the exponential one.

\begin{proof}

Recall the definitions of the quantities $a_t$ and $q_t$ given above \Cref{app:eq:ODE_int_as} which enable to write:
\begin{align*}
    \nabla \phi (\beta_\tit)  &= \nabla \phi (\beta_0) + Z^\top \int_0^t a_s q_s \dd s.
\end{align*}
Similar to the time change we performed in Section \ref{section:intuitive_constr}, we consider $\theta(t) = \int_0^t a_s \dd s$. From Lemma \ref{app:lemma:diverging_loss}, $\theta$ is a bijection over $\R_{\geq 0}$ and perform the time change $\tilde{\beta}_t = \beta_{\theta^{-1}(t)}$. Due to the chain rule, after the time change and dropping the tilde notation we obtain:
\begin{align*}
    \nabla \phi (\beta_\tit)
    &= \nabla \phi (\beta_0) + Z^\top \int_0^t q_s \dd s.
\end{align*}
Dividing by $t$ we get: 
\begin{align}\label{app:eq:equation1}
    \frac{1}{t} \nabla \phi (\beta_\tit)    &= \frac{1}{t} \nabla \phi (\beta_0) + Z^\top \bar{q}_t,
\end{align}
where $ \bar{q}_t \coloneqq \frac{1}{t} \int_0^t q_s \dd s$ corresponds to the average of $(q_s)_{s \leq t}$.

\myparagraph{Extracting a convergent subsequence:} We now consider the normalised iterates $\bbeta_t = \frac{\beta_t}{\Vert \beta_t \Vert}$ and up to an extraction we get that $\bbeta_t \to \bbeta_\infty$. Since $q_t$ is a bounded function, up to a second extraction,
 we have that $q_t \to q_\infty$, and the same holds for its average: $\bar{q}_t \to q_\infty$. Taking the limit in \Cref{app:eq:equation1} we immediately obtain that:
 \begin{align*}
   \lim_t \frac{1}{t} \nabla \phi (\beta_\tit)  = Z^\top q_\infty,
\end{align*}
which also means that 
\begin{align*}
    \frac{\nabla \phi (\beta_t)}{\Vert \nabla \phi (\beta_t) \Vert} \underset{t \to \infty}{\longrightarrow} \frac{Z^\top q_\infty}{\Vert Z^\top \bar{q}_\infty \Vert} 
\end{align*}
We can now directly apply Corollary \ref{cor:epiconvergence} and there exists $\lambda > 0$ such that:
\begin{align*}
    Z^\top q_\infty \in \lambda \partial \phi_\infty(\bbeta_\infty)
\end{align*}
The end of the proof is then as explained in Section \ref{section:intuitive_constr}.

\end{proof}

Finally we recall and prove \Cref{thm:separable} which provides a simple formula for the horizon function in the case of separable potentials.

\thmSeparable*
 
\begin{proof}
\myparagraph{Lipschitzness, upper and lower boundedness.}
For $\eta > 0$, let us denote by $h_\eta : \beta \mapsto \eta \cdot \varphi^{-1}(\phi(\beta / \eta))$ and notice that $\nabla h_\eta(\beta) = (\frac{\varphi'(\beta_k / \eta)}{\varphi'(\varphi^{-1}(\sum_i \varphi(\beta_i / \eta)))})_{k \in [d]} \geq 0$
Since $\varphi \geq 0$ and that $\varphi^{-1}$ and $\varphi'$ are increasing we get that that $\nabla h_\eta(\beta) \in [0, 1]^d$. Therefore $(h_\eta)_{\eta > 0}$ are uniformly Lipschitz-continuous. Consequently, for all $\beta$, $h_\eta(\beta)$ is upper-bounded independently of $\eta$. Lastly, since $\varphi \geq 0$, notice that  $h_\eta(\beta) \geq \min_i | \beta_i | > 0$ for all $\beta \neq 0$.

\myparagraph{Point-wise and epi-convergence of $h_\eta$.} For all $\bbeta$, by composition, $\eta \mapsto \eta \cdot \varphi^{-1}(\phi(\bbeta / \eta))$ is a definable function, the monotonicity Lemma \citep{van1996geometric} (Theorem 4.1) ensures that it has a unique limit in $\R$ which we denote $h_0(\beta)$. From the uniform Lipschitzness of $h_\eta$, we get that $(\eta, \beta) \in \R_{\geq 0} \times \R^d \mapsto h_\eta(\beta)$ is continuous. Hence for all sequence $\eta_k \to 0$, we get that $h_{\eta_k}$ epi-converges to $h_0$. Therefore 
$(\mathrm{epi} \ h_{\eta_k})_k$ converges in the Painlevé–Kuratowski sense towards $\mathrm{epi} \ h_{\eta_0}$.

\myparagraph{Link between the level sets of $h_\eta$ and those of $\phi$.}
To conclude the proof it remains to notice that for all $c \geq 0$:
\[\{\beta \in \R^d,  \phi(\beta) \leq c \} = \frac{1}{\eta} \{\bbeta \in \R^d,  h_\eta(\bbeta) \leq \eta \varphi^{-1}(c) \}.\] Therefore letting $\eta_c = 1 / \varphi^{-1}(c)$ we get that 
\[\eta_c \cdot S_c = \{\bbeta \in \R^d,  h_{\eta_c}(\bbeta) \leq 1\}.\]
This simply means that $\eta_c$ is an appropriate normalising quantity, it replaces the normalisation by the radius of $S_c$.
Since $\{\bbeta \in \R^d,  h_{\eta_c}(\bbeta) \leq 1\}$ converges in the Painlevé–Kuratowski sense towards $\{\bbeta \in \R^d,  h_{0}(\bbeta) \leq 1\}$, we get that $ R_c \eta_c \cdot \bar{S}_c$ converges towards the same set. However, with our previous construction, we also have that $\bar{S}_c$ converges towards $\bar{S}_\infty$. The sets $\bar{S}_\infty$ and $\{\bbeta \in \R^d,  h_{0}(\bbeta) \leq 1\}$ are therefore proportional and $h_0 \propto \phi_\infty$ which concludes the proof.

\end{proof}

\end{document}